\newif\ifpaper
\newtheorem{lemma}{Lemma}
\newtheorem{theorem}{Theorem}
\newtheorem{definition}{Definition}
\newtheorem{proposition}{Proposition}
\acrodef{HMM}{hidden Markov model}
\acrodef{KFM}{Kalman filter model}
\acrodef{CPD}{conditional probability distribution}
\acrodef{DBN}{dynamic Bayesian network}
\acrodef{DAG}{directed acyclic graph}
\acrodef{BN}{Bayesian network}
\acrodef{MDL}{minimum description length}
\acrodef{PDF}{probability density function}
\acrodef{BIC}{Bayesian information criterion}
\acrodef{AIC}{Akaike information criterion}
\acrodef{GDS}{graph dynamical system}
\acrodef{CPT}{conditional probability table}
\acrodef{EM}{expectation-maximisation}
\acrodef{ODE}{ordinary differential equation}
	\renewcommand\cite{\citep}
	\newcommand{\citep}[1]{\cite{#1}}
	\def\keyFont{\fontsize{8}{11}\helveticabold }
	\def\firstAuthorLast{Cliff {et~al.}} %use et al only if is more than 1 author
	\def\Authors{Oliver M. Cliff\,$^{1,*}$, Mikhail Prokopenko\,$^2$ and Robert Fitch\,$^{1,3}$}
	\title[An Information Criterion for Distributed Dynamical Systems]{An Information Criterion for Inferring Coupling of Distributed Dynamical Systems}
	\title{An Information Criterion for Inferring Coupling of Distributed Dynamical Systems}
	\author{Oliver M.~Cliff$^*$ \And Mikhail Prokopenko$^\dagger$\And Robert Fitch$^{*,\ddagger}$\\
			  \AND \\[-5mm]
			  $^*$Australian Centre for Field Robotics,\\
			  The University of Sydney, Australia,\\
			  \texttt{o.cliff@acfr.usyd.edu.au}
			  \And \\[-5mm]
			  $^\dagger$Complex Systems Research Group,\\
			  The University of Sydney, Australia,\\
			  \texttt{mikhail.prokopenko@sydney.edu.au}
			  \AND \\[-5mm]
			  $^\ddagger$Centre for Autonomous Systems,\\
			  University of Technology Sydney, Australia,\\
			  \texttt{robert.fitch@uts.edu.au}}
\begin{document}

\ifpaper
	\onecolumn
	\firstpage{1}

	\author[\firstAuthorLast ]{\Authors} %This field will be automatically populated
	\address{} %This field will be automatically populated
	\correspondance{} %This field will be automatically populated

	\extraAuth{}% If there are more than 1 corresponding author, comment this line and uncomment the next one.
	%\extraAuth{corresponding Author2 \\ Laboratory X2, Institute X2, Department X2, Organization X2, Street X2, City X2 , State XX2 (only USA, Canada and Australia), Zip Code2, X2 Country X2, email2@uni2.edu}
\fi

\maketitle

\begin{abstract}
The behaviour of many real-world phenomena can be modelled by nonlinear dynamical systems whereby a latent system state is observed through a filter. We are interested in interacting subsystems of this form, which we model by a set of coupled maps as a synchronous update \acl{GDS}. Specifically, we study the structure learning problem for spatially distributed dynamical systems coupled via a \acl{DAG}. Unlike established structure learning procedures that find locally maximum posterior probabilities of a network structure containing latent variables, our work exploits the properties of dynamical systems to compute globally optimal approximations of these distributions. We arrive at this result by the use of time delay embedding theorems. Taking an information-theoretic perspective, we show that the log-likelihood has an intuitive interpretation in terms of information transfer.

\ifpaper
	\tiny
	 \keyFont{ \section{Keywords:} Complex networks, structure learning, dynamic Bayesian networks, graph dynamical systems, information theory, dynamical systems, state space reconstruction} %All article types: you may provide up to 8 keywords; at least 5 are mandatory.
\fi
\end{abstract}
\section{Introduction}

\acresetall

% Vision
Complex systems are broadly defined as systems that comprise interacting nonlinear components~\cite{boccaletti06a}. Discrete-time complex systems can be represented using graphical models such as \emph{\acp{GDS}}~\cite{wu05a,mortveit07a}, where spatially distributed dynamical units are coupled via a directed graph. The task of learning the structure of such a system is to infer directed relationships between variables; in the case of dynamical systems, these variables are typically hidden~\cite{kantz04a}. In this paper, we study the \emph{structure learning} problem for complex networks of nonlinear dynamical systems coupled via a \ac{DAG}. Specifically, we formulate synchronous update \acp{GDS} as \acp{DBN} and study this problem from the perspective of information theory.

% Motivation
The structure learning problem for distributed dynamical systems is a precursor to inference in systems that are not fully observable. This case encompasses many practical problems of known artificial, biological and chemical systems, such as neural networks~\cite{vicente11a,lizier11a,schumacher15a}, multi-agent systems~\cite{gan14a,xu13a,cliff16a,umenberger16a} and various others~\cite{boccaletti06a}. Modelling a partially observable system as a dynamical network presents a challenge in synthesising these models and capturing their global properties~\cite{boccaletti06a}. In addressing this challenge, we draw on probabilistic graphical models (specifically \ac{BN} structure learning) and nonlinear time series analysis (differential topology).

% Approach
In this paper we exploit the properties of discrete-time multivariate dynamical systems in inferring coupling between latent variables in a \ac{DAG}. Specifically, the main focus of this paper is to analytically derive a measure (score) for evaluating the fitness of a candidate \ac{DAG}, given data. We assume the data are generated by a certain family of multivariate dynamical system and are thus able to overcome the issue of latent variables faced by established structure learning algorithms. That is, under certain assumptions of the dynamical system, we are able to employ time delay embedding theorems~\cite{stark03a,deyle11a} to compute our scores.

% Results
Our main result is a tractable form of the log-likelihood function for synchronous \acp{GDS}. Using this result, we are able to directly compute the \emph{\ac{BIC}}~\cite{schwarz78a} and \emph{\ac{AIC}}~\cite{akaike74a} and thus achieve globally optimal approximations of the posterior distribution of the graph.  Finally, we show that the log-likelihood and log-likelihood ratio can be expressed in terms of \emph{collective transfer entropy}~\cite{lizier10b,vicente11a}. This result places our work in the context of effective network analysis~\cite{sporns04a,park13a} based on information transfer~\cite{honey07a,lizier11a,cliff13a,cliff16a} and relates to the information processing intrinsic to distributed computation~\cite{lizier08b}.

\section{Related Work}

We are interested in classes of systems whereby dynamical units are coupled via a graph structure. These types of systems have been studied under several names, including complex dynamical networks~\cite{boccaletti06a}, spatially distributed dynamical systems~\cite{kantz04a,schumacher15a}, master-slave configurations (or systems with a skew product structure)~\cite{kocarev96a}, and coupled maps~\cite{kaneko92a}. Common to each of these definitions is that the multivariate state of the system comprises individual subsystem states, the dynamics of which are given by a set of either discrete-time maps or first-order \acp{ODE}, called a flow. We assume the discrete-time formulation, where a map can be obtained numerically by integrating differential equations or recording experimental data (observations) at discrete-time intervals~\cite{kantz04a}. The literature on coupled dynamical systems is often focused on the analysis of characteristics such as stability and synchrony of the system. In this work we draw on the fields of \ac{BN} structure learning and nonlinear time series analysis to infer coupling between spatially distributed dynamical systems.

\ac{BN} structure learning comprises two subproblems: \emph{evaluating} the fitness of a graph, and \emph{identifying} the optimal graph given this fitness criterion~\cite{chickering02a}. The evaluation problem is particularly challenging in the case of graph dynamical systems, which include both latent and observed variables. A number of theoretically optimal techniques exist for the evaluation problem for \acp{BN} with complete data~\cite{lam94a,bouckaert94a,heckerman95b}, which have been extended to \acp{DBN}~\cite{friedman98a}. With incomplete data, however, the common approach is to resort to approximations that find local optima, e.g., \ac{EM}~\cite{friedman98a,ghahramani98a}. An additional caveat with respect to structure learning is that algorithms find an equivalence class of networks with the same Markov structure, and not a unique solution~\cite{chickering02a}.

%\oc{I think we need to mention GC, transfer entropy, and Schumacher's approach~\cite{schumacher15a}}

In nonlinear time series analysis, the problem of inferring coupling strength and causality in complex systems has received significant attention recently~\cite{schreiber00a,hoyer09a}. Early work by Granger defined causality in terms of the predictability of one system linearly coupled to another~\cite{granger69a}. Although this measure is popular for identifying coupling, it requires systems are linear statistical models and is considered insufficient for inferring coupling between dynamical systems due to inseparability~\cite{sugihara12a}. Another method popular in neuroscience is transfer entropy, which was introduced to quantify the information transfer between nonlinear (finite-order Markov) systems~\cite{schreiber00a}. Transfer entropy has been used to recover interaction networks in numerous fields such as multi-agent systems~\cite{cliff16a} and effective networks in neuroscience~\cite{vicente11a,lizier11a,lizier12c}. More recently, researchers have used the additive noise model~\cite{hoyer09a,peters11a} to infer unidirectional cause and effect relationships with observed random variables and find a unique \ac{DAG} (as opposed to an equivalence class). These studies have been extended by exploring weakly additive noise models for learning the structure of systems of observed variables with nonlinear coupling~\cite{gretton09a}.

A recent approach to inferring causality is convergent cross-mapping (CCM), which is based on Takens theorem~\cite{takens81a} and tests for causation (predictability) by considering the history of observed data of a hidden variable in predicting the outcome of another~\cite{sugihara12a}. Using a similar approach, \ifpaper \citet{schumacher15a} \else Schumacher et al.~\cite{schumacher15a} \fi used Stark's bundle delay embedding theorem~\cite{stark99a,stark03a} to predict one subsystem from another using Gaussian processes. This algorithm can thus be used to infer the driving systems in spatially distributed dynamical systems in a similar manner to our work. However, both papers do not consider the problem of inference over the entire network structure, or formally derive the measures used therein. In our work, we provide a rigorous proof based on established structure learning procedures and discuss the problem of inference within a distributed dynamical system.

%\oc{Although \citet{schumacher15a} presented a method to perform inference in coupled dynamical systems, their results considered only measured variables. The methods presented in this paper consider exact inference and coupling over the entire network.}

\section{Background}

This section summarises relevant technical concepts used throughout the paper. First, a stochastic temporal process $X$ is defined as a sequence of random variables $(X_{1},X_2,\ldots,X_N)$ with a realisation $(x_1,x_2,\ldots,x_N)$ for countable time indices $n\in\mathbb{N}$. Consider a collection of $M$ processes, and denote the $i$th process $X^i$ to have associated realisation $x^i_n$ at temporal index $n$, and $\boldsymbol{x}_n$ as all realisations at that index $\boldsymbol{x}_n=\langle x_n^1,x_n^2,\ldots,x_n^M\rangle$. If $X^i_n$ is a discrete random variable, the number of values the variable can take on is denoted $|X^i_n|$. The following sections collect results from \ac{DBN} literature, attractor reconstruction, and information theory that are relevant to this work.

\subsection{Dynamic Bayesian networks (DBNs)}

\acp{DBN} are a general graphical representation of a temporal model, representing a probability distribution over infinite trajectories of random variables $(\boldsymbol{Z}_1,\boldsymbol{Z}_2,\ldots)$ compactly. These models are a more expressive framework than the hidden Markov Model and Kalman filter model (or linear dynamical system)~\cite{friedman98a}. In this work, we denote $\boldsymbol{Z}_n=\{\boldsymbol{X}_n,\boldsymbol{Y}_n\}$ as the set of hidden and observed variables, respectively, where $n \in \{1,2,\ldots\}$ is the temporal index.

\acp{BN} $B = (G, \Theta)$ represent a joint distribution $p( \boldsymbol{z} )$ graphically and consist of: a \ac{DAG} $G$ and a set of \ac{CPD} parameters $\Theta$. \acp{DBN} $B = ( B_1, B_\rightarrow )$ extend the \ac{BN} to model temporal processes and comprise two parts: the prior \ac{BN} $B_1=(G_1,\Theta_1)$, which defines the joint distribution $p_{B_1}(\boldsymbol{z}_1)$; and the \emph{two-time-slice Bayesian network (2TBN)} $B_{\rightarrow} = (G_\rightarrow,\Theta_\rightarrow)$, which defines a first-order Markov process $p_{B_\rightarrow}( \boldsymbol{z}_{n+1} \mid \boldsymbol{z}_{n} )$~\cite{friedman98a}. This formulation allows for a variable to be conditioned on its respective parent set $\Pi_{G_\rightarrow}(Z^i_{n+1})$ that can come from the preceding time slice or the current time slice, as long as $G_\rightarrow$ forms a \ac{DAG}. The 2TBN probability distribution factorises according to $G_{\rightarrow}$ with a local \ac{CPD} $p_D$ estimated from an observed dataset. That is, given a set of stochastic processes $( \boldsymbol{Z}_1, \boldsymbol{Z}_2, \ldots, \boldsymbol{Z}_N )$, the realisation of which constitutes a dataset $D=(\boldsymbol{z}_1,\boldsymbol{z}_2,\ldots,\boldsymbol{z}_N)$, we obtain the 2TBN distribution as
 \begin{equation} \label{eq:bn}
  p_{B_{\rightarrow}}( \boldsymbol{z}_{n+1} \mid \boldsymbol{z}_{n} ) = \prod_{i} p_{B_\rightarrow}( z^i_{n+1} \mid \pi_{G_\rightarrow}(Z^i_{n+1}) ),
 \end{equation}
where $\pi_\mathcal{G_\rightarrow}(Z^i_{n+1})$ denotes the (index-ordered) set of realisations $\{ z^{j}_{o} : Z^j_o \in \Pi_\mathcal{G_\rightarrow}(Z^i_{n+1}) \}$. % Note that, when convenient, we will use $\langle z^{ij}_n \rangle_j$ to denote the parent realisations $\pi_\mathcal{G_\rightarrow}(Z^i_{n+1})$ from time $n$.

%\subsection{Attractor reconstruction}
\subsection{Embedding theory}

Embedding theory refers to methods from differential topology for inferring the (hidden) state of a dynamical system from a reconstructed sequence of observations. The state of a discrete-time dynamical system is given by a point $\boldsymbol{x}_n$ confined to a $d$-dimensional manifold $\mathcal{M}$. The time evolution of this state is described by a map $f:\mathcal{M} \to \mathcal{M}$, so that the sequence of states $( \boldsymbol{x}_n )$ is given by $\boldsymbol{x}_{n+1} = f( \boldsymbol{x}_n )$. In many situations we only have access to a filtered, scalar representation of the state, i.e., the measurement $y_n = \psi( \boldsymbol{x}_n )$ given by some \emph{measurement function} $\psi : \mathcal{M} \to \mathbb{R}$~\cite{takens81a,stark99a}.

The celebrated Takens' theorem~\cite{takens81a} shows that for typical $f$ and $\psi$, it is possible to reconstruct $f$ from the observed time series up to some smooth coordinate change. More precisely, fix some $\kappa$ (the \emph{embedding dimension}) and some $\tau$ (the \emph{time delay}), then define the \emph{delay embedding map} $\boldsymbol{\Phi}_{f,\psi}: \mathcal{M} \to \mathbb{R}^\kappa$ by
\begin{equation} \label{eq:delay-embedding}
	\boldsymbol{\Phi}_{f,\psi}( \boldsymbol{x}_n ) = y^{(\kappa)}_n = \langle y_{n}, y_{n-\tau}, y_{n-2\tau},\ldots, y_{n-(\kappa-1)\tau} \rangle.
\end{equation}
In differential topology, an \emph{embedding} refers to a smooth map $\boldsymbol{\Psi}: \mathcal{M} \to \mathcal{N}$ between manifolds $\mathcal{M}$ and $\mathcal{N}$ if it maps $\mathcal{M}$ diffeomorphically onto its image; therefore, $\boldsymbol{\Phi}_{f,\psi}$ has a smooth inverse $\boldsymbol{\Phi}_{f,\psi}^{-1}$. The implication of Takens' theorem is that for typical $f$ and $\psi$, the image $\boldsymbol{\Phi}_{f,\psi}(\mathcal{M})$ of $\mathcal{M}$ is completely equivalent to $\mathcal{M}$ itself, apart from the smooth invertible change of coordinates given by the mapping $\boldsymbol{\Phi}_{f,\psi}$. An important consequence of this theorem is that we can define a map $\mathbf{F} = \boldsymbol{\Phi}_{f,\psi} \circ f \circ \boldsymbol{\Phi}_{f,\psi}^{-1}$ on $\boldsymbol{\Phi}_{f,\psi}$, such that $ y^{(\kappa)}_{n+1} = \mathbf{F}( y^{(\kappa)}_n ) $~\cite{stark99a}.

There are technical assumptions for Takens' theorem (and the generalised versions employed herein) to hold. These assumptions require: $(f,\psi)$ to be generic functions (in terms of Baire space), a restricted number of periodic points, and distinct eigenvalues at each neighbourhood of these points~\cite{takens81a,stark99a,stark03a,deyle11a}.

\subsection{Information theoretic measures}
\emph{Conditional entropy} represents the uncertainty of a random variable $X$ after taking into account the outcomes of another random variable $Y$ by
\begin{equation} \label{eq:conditional-entropy}
	H( X \mid Y ) = -\sum_{ x, y } p( x, y ) \log_2 p( x \mid y ).
\end{equation}
Multivariate \emph{transfer entropy} is a measure that computes the information transfer from a set of source processes to a set of destination process~\cite{lizier11a}. In this work, we use the formulation of collective transfer entropy~\cite{lizier10b}, where the information transfer from $m$ source processes $\boldsymbol{V} = \{ Y^{1}, Y^{2}, \ldots, Y^{m} \}$ to a single destination process $Y$ can be decomposed as a sum of conditional entropy terms:
\begin{equation} \label{eq:te}
	T_{ \boldsymbol{V} \to Y} = H \left( Y_{n+1} \mid Y^{(\kappa)}_{n} \right) - H\left( Y_{n+1} \mid Y^{(\kappa)}_{n}, \langle Y^{i,(\kappa^{i})}_n \rangle \right),
\end{equation}
where $Y^{i,(\kappa^i)}_n = \langle Y^{i}_n, Y^{i}_{n-\tau^i}, Y^{i}_{n-2\tau^i}, \ldots, Y^{i}_{n-(\kappa^i-1)\tau^i} \rangle$ for some $\kappa^i$ and $\tau^i$, and similarly for $Y^{(\kappa)}_n$.

\section{Representing nonlinear dynamical networks as \acp{DBN}} \label{sec:gds-dbn}

We express multivariate dynamical systems as a synchronous update \ac{GDS} to allow for generic maps. With this model, we can express the time evolution of the \ac{GDS} as a stationary \ac{DBN}, and perform inference and learning on the subsequent graph. We formally state the network of dynamical systems as a special case of the sequential \ac{GDS}~\cite{mortveit01a} with an observation function for each vertex.
\begin{definition}[Synchronous graph dynamical system (GDS)] \label{def:graph-dynamical-system}
	A synchronous \ac{GDS} is a tuple $(G,\boldsymbol{x}_n,\boldsymbol{y}_n,\{ f^{i} \}, \{ \psi^i \})$ that consists of:
	\begin{itemize}
		\item a finite, directed graph $G=(\mathcal{V},\mathcal{E})$ with edge-set $\mathcal{E} = \{E^i\}$ and $M$ vertices comprising the vertex set $\mathcal{V}=\{V^i\}$;
		\item a multivariate state $\boldsymbol{x}_n=\langle x^i_n\rangle$, composed of states for each vertex $V^i$ confined to a $d^i$-dimensional manifold $x^i_n \in \mathcal{M}^{i}$;
		\item an $M$-variate observation $\boldsymbol{y}_n=\langle y^i_n\rangle$, composed of scalar observations for each vertex $y^i_n \in \mathbb{R}$;
		\item a set of local maps $\{ f^i \}$ of the form $f^i:\mathcal{M} \to \mathcal{M}^i$, which update synchronously and induce a global map $f:\mathcal{M} \to \mathcal{M}$; and
		\item a set of local observation functions $\{ \psi^1,\psi^2,\ldots,\psi^M \}$ of the form $\psi^i:\mathcal{M}^i\to\mathbb{R}$.
	\end{itemize} 
\end{definition}
Without loss of generality, we can use local functions to describe the time evolution of the subsystems:
\begin{gather}
	x^i_{n+1}=f^i(x^i_n, \langle x^{ij}_n \rangle_j) + \upsilon_{f^i} \label{eq:dynamics} \\
	y^i_{n+1} = \psi^i( x^i_{n+1} ) + \upsilon_{\psi^i}. \label{eq:observation}
\end{gather}
Here, $\upsilon_{f^i}$ is i.i.d. additive noise and $\upsilon_{\psi^i}$ is noise that is either i.i.d. or dependent on the state, i.e., $\upsilon_{\psi^i}( x^i_{n+1} )$. The subsystem dynamics~\eqref{eq:dynamics} are therefore a function of the subsystem state $x^i_n$ and the subsystem parents' state $\langle x^{ij}_n \rangle_j$ at the previous time index such that $f^i:\mathcal{M}^{i} \times_j \mathcal{M}^{ij} \to \mathcal{M}^i$. Each subsystem observation is given by~\eqref{eq:observation}. We assume the functions $\{f^i\}$ and $\{\psi^i\}$ are invariant w.r.t. time and thus the graph $G$ is stationary.

The time evolution of a synchronous \ac{GDS} can be modelled as a \ac{DBN}. First, each subsystem vertex $V^i = \{X^i_n, Y^i_n\}$ has an associated state variable $X^i_n$ and observation variable $Y^i_n$; the parents of subsystem $V^i$ are denoted $\Pi_G(V^i)$. Since the graph $G_\rightarrow$ is stationary and synchronous, parents of $X^i_{n+1}$ come strictly from the preceding time slice, and additionally $\Pi_{G_\rightarrow}( Y^i_{n+1} ) = X^i_{n+1}$. Thus, we can build the edge set $\mathcal{E} = \{E^1,E^2,\ldots,E^M\}$ in the \ac{GDS} by means of the \ac{DBN}. That is, each edge subset $E^i$ is built by the \ac{DBN} edges \[ E^i = \{ V^j \to V^i : X^j_n \in \Pi_{G_\rightarrow}( X^i_{n+1} ) \wedge V^j \in \mathcal{V} \setminus V^i \},\] so long as $G$ forms a \ac{DAG}.

As an example, consider the synchronous \ac{GDS} in Fig.~\ref{fig:graph}. The subsystem $V^3$ is coupled to both subsystem $V^1$ and $V^2$ through the edge set $\mathcal{E} = \{ V^1 \to V^3, V^2 \to V^3 \}$. The time-evolution of this network is shown in Fig.~\ref{fig:dbn}, where the top two rows (processes $X^1$ and $Y^1$) are associated with subsystem $V^1$, and similarly for $V^2$ and $V^3$. The distributions for the state~\eqref{eq:dynamics} and observation~\eqref{eq:observation} of $M$ arbitrary subsystems can therefore be factorised according to~\eqref{eq:bn}:
\begin{equation} \label{eq:factored-dbn}
	p_{B_{\rightarrow}}( \boldsymbol{z}_{n+1} \mid \boldsymbol{z}_{n} ) = \prod^M_{i=1} p_D( x^i_{n+1} \mid x^i_n, \langle x^{ij}_{n} \rangle_j ) \cdot p_D( y^i_{n+1} \mid x^i_{n+1} ).
\end{equation}

In the rest of the paper we use simplified notation, given this constrained graph structure. Firstly, since our focus is on learning coupling between distributed systems, the superscripts refer to individual \emph{subsystems}, not variables. Thus, although the 2TBN $B_\rightarrow$ is constrained such that $\Pi_{G_\rightarrow}(Y^i_n) = X^i_n$, the notation $Y^{ij}_n$ denotes the \emph{measurement variable} of the $j$th parent of subsystem $i$, e.g., in Fig.~\ref{fig:graphs} an arbitrary ordering of the parents gives $Y^{3,1}_n=Y^1_n$ and $Y^{3,2}_n=Y^2_n$. Secondly, the scoring functions for the 2TBN network $B_\rightarrow$ can be computed independently of the prior network $B_1$~\cite{friedman98a}. We will assume the prior network is given, and focus on learning the 2TBN. As a result, we drop the subscript and note that all references to the network $B$ are to the 2TBN. Since $B_\rightarrow$ is stationary, learning $B_\rightarrow$ is equivalent to learning the synchronous \ac{GDS}.

%\oc{I've separated the multivariate dynamical system from the \ac{GDS} here, since \ac{BIC} assumes existence of an ideal graph, whereas \ac{AIC} aims for finding the model that best predicts the data.}

\ifpaper
	\begin{figure}[t]
		\centering
		\begin{minipage}[c]{.1\linewidth}
			\centering
			\includestandalone[width=44mm, angle=90]{../figs/gds}
	%		\raisebox{3mm}
			\subcaption{}\label{fig:graph}
		\end{minipage}%
		\begin{minipage}[c]{.6\linewidth}
			\centering
			\includestandalone[height=50mm]{../figs/dbn}
			\subcaption{}\label{fig:dbn}
		\end{minipage}
		\caption{Representation of~(\ref{fig:graph}) the synchronous \ac{GDS} with three vertices ($V^1$, $V^2$ and $V^3$), and~(\ref{fig:dbn}) the rolled-out \ac{DBN} of the equivalent structure. Subsystem $V^3$ is coupled to both subsystems $V^1$ and $V^2$ by means of the edges between latent variables $X^1_{n} \to X^3_{n+1}$ and $X^2_{n} \to X^3_{n+1}$.}\label{fig:graphs}
	\end{figure}
\else
	\begin{figure}[t]
		\centering
		\subfigure[]{\raisebox{5mm}{\includegraphics[width=52mm, clip=true, trim=52mm 205mm 55mm 47mm, angle=90]{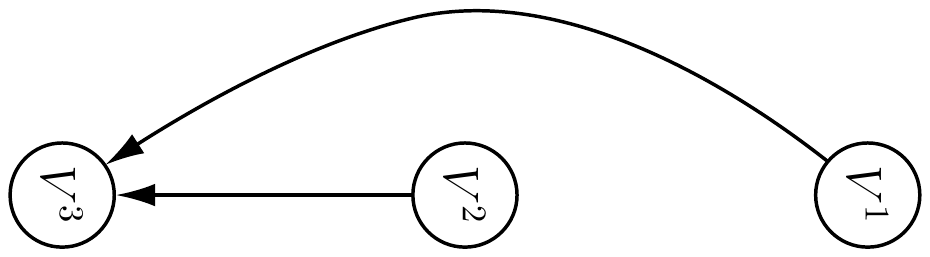}}\label{fig:graph}} \hspace{10mm}
		\subfigure[]{\includegraphics[height=55mm, clip=true, trim=50mm 148mm 10mm 43mm]{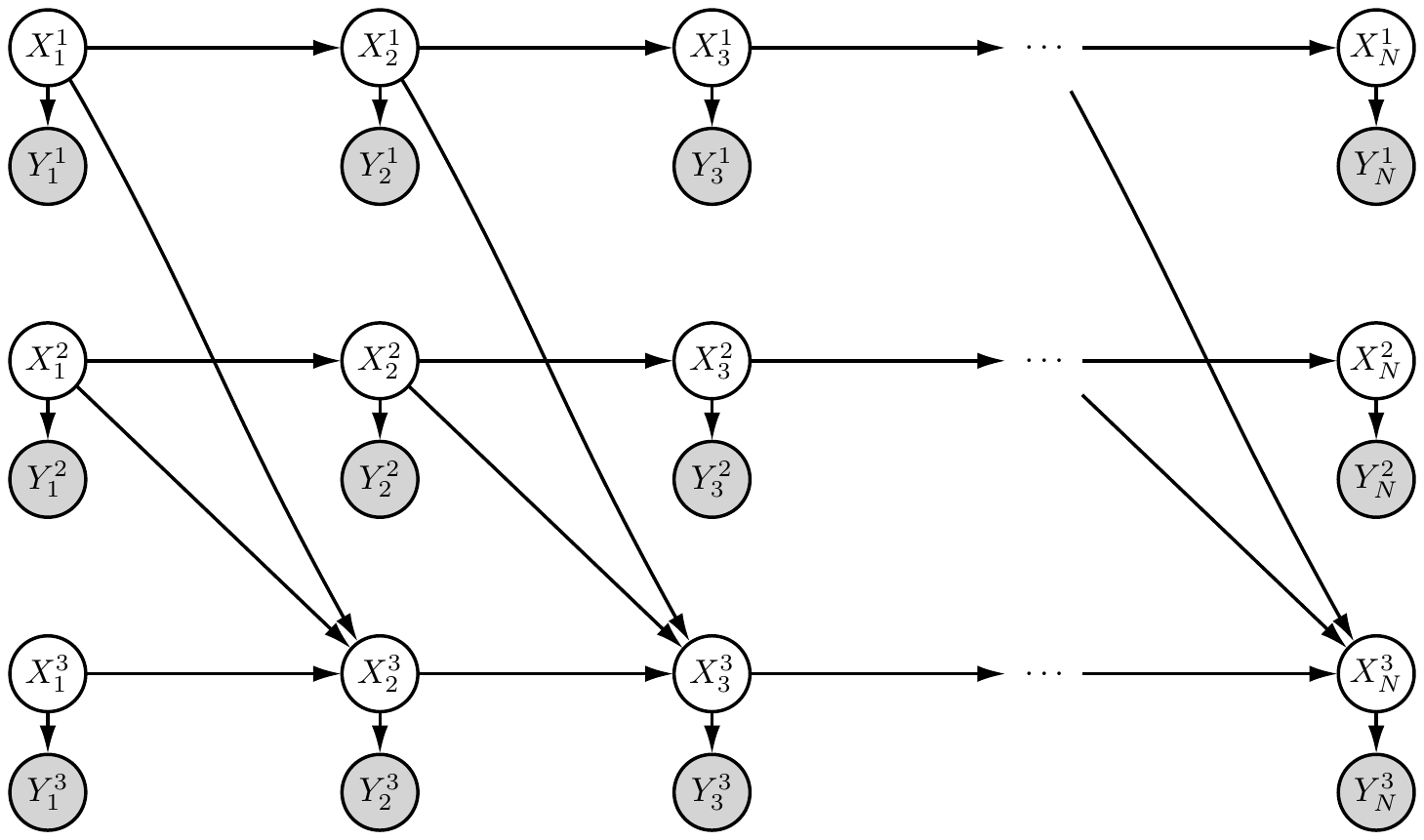}\label{fig:dbn}}
		\caption{Representation of (a)~the \ac{GDS} with three vertices ($V^1$, $V^2$ and $V^3$), and (b)~the rolled-out \ac{DBN} of the equivalent structure. Subsystem $V^3$ is coupled to both subsystems $V^1$ and $V^2$ by means of the edges between latent variables $X^1_{n} \to X^3_{n+1}$ and $X^2_{n} \to X^3_{n+1}$. 
	%	In Fig.~\ref{fig:dbn}, the dotted edges illustrate that all subsystems have embedding dimension $\kappa^i=2$ and delay $\tau^i=1$.
		}\label{fig:graphs}
	\end{figure}
\fi

\section{Learning synchronous \acp{GDS} from data}

%The main objective of the paper as providing a method to learn the structure of a synchronous \ac{GDS}.
%Our problem can be expressed as: given an observed training data set $D = ( \boldsymbol{y}_1, \boldsymbol{y}_2, \ldots, \boldsymbol{y}_N )$ of instances of $\boldsymbol{Z}_n$ generated by a multivariate dynamical system, develop a scoring function $g( G : D )$ that measures the degree of fitness of any candidate DAG $G$ to the data set $D$.

In this section we develop the theory for learning the synchronous update \ac{GDS} from data. We will focus on techniques for learning graphical models using the \emph{score and search} paradigm, the objective of which is to find a DAG $G^*$ that maximises a score $g( B : D )$. Given such a score, we can then employ established search procedures to find the optimal graph $G^*$. Thus, we can state that our main goal is to derive a tractable scoring function $g( B : D )$ for synchronous \acp{GDS} that gives a parsimonious model for describing the data. 
%This is effectively employing the \emph{\ac{MDL} principle}~\cite{rissanen78a}, which is a mathematical formulation of parsimony in model building~\cite{schwarz78a,rissanen78a}.

To derive the score we use the \ac{DBN} formulation of synchronous \acp{GDS} (Sec.~\ref{sec:gds-dbn}) to show that we cannot directly compute the posterior probability of the network structure (Sec.~\ref{sec:learning-dbns}). By making some assumptions about the system, however, we are able to compute scores for \acp{GDS} by use of attractor reconstruction methods (Sec.~\ref{sec:computing-likelihood}). We conclude this section by giving an interpretation of the log-likelihood in terms of information transfer (Sec.~\ref{sec:scoring-gds}).

%There are numerous techniques for scoring the structure of a \ac{DBN}, of which there exist rigorous solutions in the case of discrete~\cite{friedman98a} and Gaussian variables~\cite{geiger94a}. Solutions for structure learning with continuous non-Gaussian variables require discretisation~\cite{monti98a} or density estimation techniques~\cite{hofmann96a}. We will focus on the \ac{AIC} and \ac{BIC} scores, which combine the \emph{likelihood} of the data according to the network $L( B : D ) = \log p( D \mid G, \Theta )$, with some penalty relating to the complexity of the network.

% TODO: Explain why we don't want to use Hofmann's result - it seems pretty decent..

\subsection{Structure learning for \acp{DBN}} \label{sec:learning-dbns}

Ideally, we want to be able to compute the posterior probability of the network structure $G$, given data $D$. Using Bayes' rule, we can express this distribution as $p( G \mid D ) \propto p( D \mid G ) p( G )$, where $p( G )$ encodes any prior assumptions we want to make about the network $G$. Thus, the problem becomes that of computing the likelihood of the data, given the model, $p( D \mid G )$. The likelihood can be written in terms of distributions over network parameters~\cite{friedman98a}:
\begin{equation} \label{eq:d-given-g}
	p( D \mid G ) = \int p( D \mid G, \Theta ) p( \Theta \mid G ) d\Theta,
\end{equation}
where we denote $\ell( \hat{\Theta}_G : D ) = \log p( D \mid G, \hat{\Theta}_G )$ as the log-likelihood function for a choice of parameters $\hat{\Theta}_G$ that maximise $p( D \mid G, \Theta )$, given a graph $G$.
%This gives the simplest scoring function, the \emph{maximum likelihood score}:
%\begin{equation} \label{eq:g-l}
%	g_{\textsc{L}}( G : D ) = \ell( \hat{\Theta}_G : D ).
%\end{equation}

A common approach to compute~\eqref{eq:d-given-g} in closed form is by using Dirichlet priors. This leads to the BD (Bayesian-Dirichlet) score and variants~\cite{heckerman95b,friedman98a}. However, to obtain this analytic solution, we require counts of the tuples $(z^i_n, \pi_G( Z^i_n ))$, which involve hidden variables. We will instead use Schwarz's~\cite{schwarz78a} asymptotic approximation of the posterior distribution, which states that
\begin{equation} \label{eq:bic}
	\lim_{N\to\infty} \log p( D \mid G ) \approx \ell( \hat{\Theta}_G : D ) - \frac{\log N}{2} C(G) + \mathcal{O}(1),
\end{equation}
where $C(G)$ is the model dimension (i.e., number of parameters needed for the graph $G$~\cite{friedman98a}) and $\mathcal{O}(1)$ is a constant bounded by the number of potential models. The approximation of the posterior~\eqref{eq:bic} requires that data come from an exponential family of likelihood functions with conjugate priors over the model $G$, and the parameters given the model $\Theta_G$~\cite{schwarz78a}.

Akaike gives a similar criterion by approximating the KL-divergence of any model from the data~\cite{akaike74a}. We can compute both criteria in terms of the log-likelihood function $\ell( \hat{\Theta}_G : D )$ and the model dimension $C(G)$, and thus the problem can be generalised to that of deriving an information criterion for scoring the graph of the form
\begin{equation} \label{eq:g-mdl}
	g(B:D) = \ell( \hat{\Theta}_G : D ) - f(N) \cdot C(G).
\end{equation}
When $f(N)=1$, we have the \ac{AIC} score~\cite{akaike74a}; $f(N)=\log(N)/2$ yields the \ac{BIC} score~\cite{schwarz78a}, and $f(N)=0$ gives the maximum likelihood score.
%It should be noted that the \ac{MDL} principle can also be employed via cross-entropy with a different model encoding scheme~\cite{lam94a}.

\subsection{Deriving the scores for synchronous \acp{GDS}} \label{sec:computing-likelihood}

To calculate the information criterion~\eqref{eq:g-mdl}, we require tractable expressions for the log-likelihood function $\ell( \hat{\Theta}_G : D )$ and the model dimension $C(G)$. The form of the \ac{CPD} in~\eqref{eq:factored-dbn} specifies these functions, and for~\eqref{eq:bic} to hold, this distribution must come from an exponential family~\cite{schwarz78a}. We do not assume the underlying model is linear-Gaussian or other known distributions, and thus express the log-likelihood as the maximum likelihood estimate for multinomial distributions~\cite{friedman98a}. From~\eqref{eq:factored-dbn} the log-likelihood then decomposes as
\begin{align} \label{eq:ll-prob}
		\ell( \hat{\Theta}_G : D ) &= -N \sum_{i=1}^M \left[ \sum_{x^i_{n+1}} \sum_{ \langle x^{ij}_{n} \rangle_j } p_D( x^i_{n+1}, x^i_{n}, \langle x^{ij}_{n} \rangle_j ) \log p_D( x^i_{n+1} \mid x^i_n, \langle x^{ij}_{n} \rangle_j ) \right. \nonumber \\
			 &\hspace{25mm} + \left. \sum_{x^i_{n+1}}\sum_{ y^i_{n+1} } p_D( y^i_{n+1}, x^i_{n+1} ) \log  p_D( y^i_{n+1} \mid x^i_{n+1} ) \right].
\end{align}
Note that although we describe the states and observations as discrete in~\eqref{eq:ll-prob}, we assume the data are generated by a continuous and stationary process. In theory it is conceivable to have access to an infinite dataset containing realisations of all potential states and observations. In practice we have a limited dataset and therefore must implement a discretisation scheme. Modelling the dynamical systems with non-parametric techniques requires that the number of parameters scales linearly in the size of the data, and thus  $C(G)$ scales linearly with $N$. Instead, later we will assume the observation data are discretised, such that there are $|Y^i_n|$ possible outcomes for an observed random variable $Y^i_n$.

The log-likelihood function~\eqref{eq:ll-prob} involves distributions over latent variables, and thus we resort to state-space (attractor) reconstruction. First, Lemma~\ref{lem:gds-observation} shows that a future observation from a given subsystem can be predicted from a sequence of past observations. Building on this result, we present a computable formulation of the 2TBN distribution $p_{B_\rightarrow}( \boldsymbol{z}_{n+1} \mid \boldsymbol{z}_{n} )$ via Lemma~\ref{lem:graph-cpd}. We then derive a tractable form of the log-likelihood function, presented in Lemma~\ref{lem:ll}. It is then shown in Theorem~\ref{th:max-ll} that these lemmas allow us to compute the information criterion~\eqref{eq:g-mdl}.

\begin{lemma} \label{lem:gds-observation}
	Consider a synchronous \ac{GDS} $(G,\boldsymbol{x}_n,\boldsymbol{y}_n,\{ f^{i} \}, \{ \psi^i \})$, where the graph $G$ is a \ac{DAG}. Each subsystem state follows the dynamics $x^i_{n+1} = f^i( x^i_n, \langle x^{ij}_n \rangle_j )$ and emits an observation $y^i_{n+1} = \psi^i( x^i_{n+1} )$; the subsystem observation can be estimated, for some map $\mathbf{G}^i$, by
	\begin{equation} \label{eq:g}
		y^i_{n+1} = \mathbf{G}^i\left( y^{i,(\kappa^i)}_n, \langle y^{ij,(\kappa^{ij})}_n \rangle_j \right).
	\end{equation}
\end{lemma}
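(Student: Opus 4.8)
The plan is to reconstruct the latent state of each subsystem from a delay-coordinate vector of its own observations and then compose this reconstruction with the known local maps $f^i$ and $\psi^i$. I work with the deterministic statement as given in the lemma, dropping the noise terms $\upsilon_{f^i}$ and $\upsilon_{\psi^i}$ of~\eqref{eq:dynamics}--\eqref{eq:observation}, since the embedding theorems I invoke are deterministic. The central invariant I would establish is that, for every subsystem $V^i$, there is a smooth map $h^i$ with $x^i_n = h^i(y^{i,(\kappa^i)}_n)$, where the delay vector is as in~\eqref{eq:delay-embedding}. Granting this, \eqref{eq:g} follows immediately: substituting $x^i_n = h^i(y^{i,(\kappa^i)}_n)$ and $x^{ij}_n = h^{ij}(y^{ij,(\kappa^{ij})}_n)$ into the composition $y^i_{n+1} = \psi^i(f^i(x^i_n, \langle x^{ij}_n\rangle_j))$ yields a map $\mathbf{G}^i = \psi^i \circ f^i \circ (h^i, \langle h^{ij}\rangle_j)$ that depends only on the delay vectors of $V^i$ and of its immediate parents.

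To establish the invariant I would exploit the acyclicity of $G$, which ensures that each vertex $V^i$ together with its ancestor set evolves autonomously, the ancestors supplying a deterministic forcing to $V^i$. For a source vertex this ancestor set is empty and the dynamics~\eqref{eq:dynamics} are autonomous on $\mathcal{M}^i$, so Takens' theorem~\cite{takens81a} (as recalled after~\eqref{eq:delay-embedding}) applies directly: for generic $(f^i,\psi^i)$ and $\kappa^i$ large enough, $\boldsymbol{\Phi}_{f^i,\psi^i}$ is a diffeomorphism onto its image and $h^i = \boldsymbol{\Phi}_{f^i,\psi^i}^{-1}$. For a non-source vertex the skew-product structure places the reconstruction in the regime of the generalised (bundle) delay embedding theorems of Stark and of Deyle and Sugihara~\cite{stark99a,stark03a,deyle11a}, which extend Takens' theorem to forced systems and guarantee that the delay map of the response observation $\psi^i$ is generically an embedding of the joint (response, driving) state space; in particular $x^i_n$ is a smooth function $h^i$ of $y^{i,(\kappa^i)}_n$. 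Since the same statement applies to every parent $V^{ij}$, each $x^{ij}_n$ is likewise recovered from $y^{ij,(\kappa^{ij})}_n$ — which is exactly why only the immediate parents, and not the deeper ancestors, need appear in~\eqref{eq:g}.

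The hard part will be the forced (non-source) case: unlike in plain Takens' theorem, $V^i$ is not autonomous, so recovering $x^i_n$ from its own observations requires the response delay map to embed the larger (response, driving) state, and it is the bundle embedding theorem that supplies this. The main technical burden is therefore to verify that the hypotheses recalled after~\eqref{eq:delay-embedding} — generic $(f^i,\psi^i)$, finitely many periodic points, and distinct eigenvalues in their neighbourhoods — persist for each induced skew product, together with the non-resonance conditions that the forced version of the theorem imposes on the interaction between $f^i$ and its driving dynamics. I expect that adopting these conditions, as is standard for Takens-type results, discharges the obstacle and completes the argument.
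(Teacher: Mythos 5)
Your proposal is correct and follows essentially the same route as the paper's proof: it exploits the DAG structure so that a subsystem's ancestors act as an autonomous forcing, invokes Stark's bundle delay embedding theorem~\cite{stark99a,stark03a} to invert the delay maps of the subsystem and of each parent, and composes these inverses with $f^i$ and $\psi^i$ to obtain $\mathbf{G}^i$. The only substantive difference is that you discard the noise terms, whereas the paper keeps them by treating the noise as an additional i.i.d.\ forcing system, as permitted by~\cite{stark03a}.
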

\begin{proof}
	Consider a forced system $\boldsymbol{x}_{n+1} = f( \boldsymbol{x}_n, w_n )$ with forcing dynamics $\boldsymbol{w}_{n+1} = h( \boldsymbol{w}_n )$ and observation $y_n = \psi( \boldsymbol{x}_{n+1} )$. Given this type of forced system, the bundle delay embedding theorem \cite{stark99a,stark03a} states that the delay map $\boldsymbol{\Phi}_{f,h,\psi}( \boldsymbol{x}_n, \boldsymbol{w}_n ) = y^{(\kappa)}_n$  is an embedding for generic $f$, $\psi$, and $h$. Stark~\cite{stark99a} proved this result in the case of forcing dynamics $h$ that are independent of the state $x$.\footnote{Stark~\cite{stark99a} conjectures that the theorem should generalise to functions $h$ that are not independent of $x$. To the best of our knowledge, this result remains to be proven.} For notational simplicity, we omit dependence on the noise process for the map $\boldsymbol{\Phi}_{f,h,\psi}$; the noise can be considered an additional forcing system so long as $\upsilon_f$ is i.i.d and $\upsilon_\psi$ is either i.i.d or dependent on the state~\cite{stark03a}.

	Given a \ac{DAG} $G$, any ancestor of the subsystem $V^i$ is not dependent on $V^i$. As such, the sequence
	\begin{equation} \label{eq:phi}
		y^{i,(\kappa^i)}_n = \boldsymbol{\Phi}_{f^i,\langle f^{ij} \rangle_j, \psi^i} \left( x^i_n,\langle x^{ij}_n \rangle_j \right)
	\end{equation}
	is an embedding, since $\langle x^{ij}_n \rangle_j$ is independent of $x^i_n$. Let $\langle  x^{ijk}_n \rangle_k$ be the index ordered set of parents of node $X^{ij}_n$ (which itself is the $j$th parent of the node $X^i_n$). Under the constraint that $G$ is a \ac{DAG}, where the state $x^i_{n+1} = f^i( x^i_n, \langle x^{ij}_n \rangle_j ) + \upsilon_{f^i}$, it follows from the bundle delay embedding theorem~\cite{stark99a,stark03a} that there exists a map $\mathbf{F}^i$ that is well defined and a diffeomorphism between observation sequences. From~\eqref{eq:phi} we can write this map
	\begin{align} \label{eq:F}
		y^{i,(\kappa^i)}_{n+1} &= \boldsymbol{\Phi}_{ f^i,\langle f^{ij} \rangle_j, \psi^i} \left( f^i \left( x^i_n,\langle x^{ij}_n \rangle_j \right), \left\langle f^{ij} \left( x^{ij}_n,\langle x^{ijk}_n \rangle_k \right) \right\rangle_j \right) \nonumber \\
					  &= \boldsymbol{\Phi}_{f^i,\langle f^{ij} \rangle_j, \psi^i} \left( f^i \left( \boldsymbol{\Phi}^{-1}_{f^i,\langle f^{ij} \rangle_j, \psi^i}( y^{i,(\kappa^i)}_n ), \left\langle \boldsymbol{\Phi}^{-1}_{f^{ij},\langle f^{ijk} \rangle_k, \psi^{ij}}( y^{ij,(\kappa^{ij})}_n ) \right\rangle_j \right) \right).
	\end{align}
	Denote the RHS of~\eqref{eq:F} as $\mathbf{F}^i ( y^{i,(\kappa^i)}_n, \langle y^{ij,(\kappa^{ij})}_n \rangle_j )$; the last $\kappa^i+\sum_j \kappa^{ij}$ components of $\mathbf{F}^i$ are trivial. Denote the first component as $\mathbf{G}^i:\mathbb{R}^{\kappa^i}\times_j\mathbb{R}^{\kappa^{ij}} \to \mathbb{R}$, then we arrive at~\eqref{eq:g}.
\end{proof}

\begin{lemma} \label{lem:graph-cpd}
	Given an observed dataset $D=(\boldsymbol{y}_1,\boldsymbol{y}_2,\ldots,\boldsymbol{y}_N)$ where $\boldsymbol{y}_n\in\mathbb{R}^M$ are generated by a directed and acyclic synchronous \ac{GDS} $(G, \boldsymbol{x}_n,\boldsymbol{y}_n, \{ f^i \}, \{ \psi^i \})$, the 2TBN distribution can be written as
	\begin{equation} \label{eq:cpd2}
		\prod^M_{i=1} p_D( x^i_{n+1} \mid x^i_n, \langle x^{ij}_{n} \rangle_j ) \cdot p_D( y^i_{n+1} \mid x^i_{n+1} ) = \frac{ \prod_{i=1}^M p_D( y^i_{n+1} \mid y^{i,(\kappa^i)}_n,\langle y^{ij,(\kappa^{ij})}_n \rangle_j ) }{ p_D( \boldsymbol{x}_n \mid \langle y^{i,(\kappa^i)}_n \rangle ) }.
	\end{equation}
\end{lemma}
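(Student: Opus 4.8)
The plan is to recognise the left-hand side of \eqref{eq:cpd2} as the joint one-step transition density implied by the \ac{DBN} factorisation \eqref{eq:factored-dbn}, and then to trade conditioning on the \emph{hidden} states for conditioning on the \emph{observable} delay coordinates using the diffeomorphism furnished by Lemma~\ref{lem:gds-observation}. First I would collapse the product using the graph's conditional-independence structure: each $X^i_{n+1}$ has parents $\{X^i_n\} \cup \langle X^{ij}_n \rangle_j$ from slice $n$ and each $Y^i_{n+1}$ has the single parent $X^i_{n+1}$, so
\begin{equation*}
\prod_{i=1}^M p_D( x^i_{n+1} \mid x^i_n, \langle x^{ij}_{n} \rangle_j ) \cdot p_D( y^i_{n+1} \mid x^i_{n+1} ) = p_D( \boldsymbol{x}_{n+1} \mid \boldsymbol{x}_n )\, p_D( \boldsymbol{y}_{n+1} \mid \boldsymbol{x}_{n+1} ) = p_D( \boldsymbol{x}_{n+1}, \boldsymbol{y}_{n+1} \mid \boldsymbol{x}_n ),
\end{equation*}
the last step using $\boldsymbol{Y}_{n+1} \perp \boldsymbol{X}_n \mid \boldsymbol{X}_{n+1}$. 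A single application of the chain rule then gives $p_D( \boldsymbol{x}_{n+1}, \boldsymbol{y}_{n+1} \mid \boldsymbol{x}_n ) = p_D( \boldsymbol{y}_{n+1} \mid \boldsymbol{x}_n )\, p_D( \boldsymbol{x}_{n+1} \mid \boldsymbol{y}_{n+1}, \boldsymbol{x}_n )$, splitting the target into a predictable observation factor and a residual.

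Next I would identify the first factor with the numerator of \eqref{eq:cpd2}. By Lemma~\ref{lem:gds-observation} the delay map \eqref{eq:phi} is a diffeomorphism, so $y^{i,(\kappa^i)}_n$ and the tuple $(x^i_n, \langle x^{ij}_n \rangle_j)$ encode the same information and may be exchanged freely inside any conditioning set; conditioning additionally on the parents' embeddings $\langle y^{ij,(\kappa^{ij})}_n \rangle_j$ (which carry the grandparent states) is redundant because $Y^i_{n+1}$ is, after marginalising the intermediate $x^i_{n+1}$, a function of slice $n$ only through $(x^i_n, \langle x^{ij}_n \rangle_j)$. Since the driving noises are independent across subsystems, the $Y^i_{n+1}$ are conditionally independent given $\boldsymbol{x}_n$, whence
\begin{equation*}
p_D( \boldsymbol{y}_{n+1} \mid \boldsymbol{x}_n ) = \prod_{i=1}^M p_D( y^i_{n+1} \mid x^i_n, \langle x^{ij}_n \rangle_j ) = \prod_{i=1}^M p_D( y^i_{n+1} \mid y^{i,(\kappa^i)}_n, \langle y^{ij,(\kappa^{ij})}_n \rangle_j ),
\end{equation*}
which is exactly the numerator.

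It then remains to show that the residual equals the reciprocal of the denominator, i.e.\ $p_D( \boldsymbol{x}_{n+1} \mid \boldsymbol{y}_{n+1}, \boldsymbol{x}_n ) = 1 / p_D( \boldsymbol{x}_n \mid \langle y^{i,(\kappa^i)}_n \rangle )$. The invertibility of \eqref{eq:phi} recovers each $x^i_n$ coordinate-wise from $\langle y^{i,(\kappa^i)}_n \rangle$ through $\boldsymbol{\Phi}^{-1}$, so $p_D( \boldsymbol{x}_n \mid \langle y^{i,(\kappa^i)}_n \rangle )$ is a degenerate (point-mass) conditional; likewise, along the deterministic skeleton of \eqref{eq:dynamics} the forward map fixes $\boldsymbol{x}_{n+1}$ given $\boldsymbol{x}_n$, so $p_D( \boldsymbol{x}_{n+1} \mid \boldsymbol{y}_{n+1}, \boldsymbol{x}_n )$ is degenerate too. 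Under the discretisation adopted for the likelihood \eqref{eq:ll-prob} both conditionals are point masses equal to one, and the claimed ratio follows.

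I expect the last paragraph to be the main obstacle. Its difficulty is twofold: making the hidden-state/delay-coordinate interchange rigorous (it rests entirely on the genericity and invertibility hypotheses of Lemma~\ref{lem:gds-observation}, and on folding the dynamical and measurement noise into the forcing system as in that lemma's proof), and reconciling that \eqref{eq:cpd2} equates objects living over different variable sets --- $(\boldsymbol{x}_{n+1}, \boldsymbol{y}_{n+1})$ on the left versus $\boldsymbol{y}_{n+1}$ and $\boldsymbol{x}_n$ on the right. I would handle this by working with the discretised counts that underlie \eqref{eq:ll-prob}, where the embedding-induced conditionals are honest point masses, rather than attempting a continuous change of variables that would drag in Jacobian factors.
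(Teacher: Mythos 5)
Your overall strategy --- factorise the left side via the chain rule and then trade hidden-state conditioning for delay-coordinate conditioning --- is genuinely different from the paper's argument, but it breaks down at exactly the step you identify as the obstacle. The paper never forms the residual $p_D( \boldsymbol{x}_{n+1} \mid \boldsymbol{y}_{n+1}, \boldsymbol{x}_n )$; instead it writes the numerator of \eqref{eq:cpd2} directly as the product of the denominator and the left-hand side (its equation \eqref{eq:end2}): the probability of predicting $y^i_{n+1}$ from the delay vectors equals the probability that the reconstructed state is the true one, times the transition and emission probabilities. Rearranging gives \eqref{eq:cpd2} without ever requiring any conditional to be degenerate. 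Crucially, the denominator in the paper is \emph{not} a point mass: it is a genuine \ac{CPD} capturing the residual uncertainty about $\boldsymbol{x}_n$ after observing the delay vectors, which the paper proposes to compute via Casdagli's methods precisely because it is nontrivial in noisy settings. Your closing identity $p_D( \boldsymbol{x}_{n+1} \mid \boldsymbol{y}_{n+1}, \boldsymbol{x}_n ) = 1 / p_D( \boldsymbol{x}_n \mid \langle y^{i,(\kappa^i)}_n \rangle )$ cannot hold except when both sides equal one --- a probability is at most one while the reciprocal of a probability is at least one --- and forcing both to one makes the denominator identically unity, which empties the lemma of content and contradicts the role the paper assigns to that term. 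The same degeneracy assumption is what lets you identify $p_D( y^i_{n+1} \mid y^{i,(\kappa^i)}_n, \langle y^{ij,(\kappa^{ij})}_n \rangle_j )$ with $p_D( y^i_{n+1} \mid x^i_n, \langle x^{ij}_n \rangle_j )$; once observational noise makes the delay vector strictly less informative than the true state, that exchange also fails.

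There is a second missing ingredient. The denominator conditions on the \emph{full} collection $\langle y^{i,(\kappa^i)}_n \rangle$ and reconstructs the \emph{global} state $\boldsymbol{x}_n$. Lemma~\ref{lem:gds-observation} only provides, per subsystem, an embedding of the local bundle $(x^i_n, \langle x^{ij}_n \rangle_j)$; it does not supply an inverse from the concatenated delay vectors to $\boldsymbol{x}_n$. The paper obtains this from the generalised (inhomogeneous, multivariate) delay embedding theorem of Deyle and Sugihara, which shows that the map \eqref{eq:phis} with $\sum_i \kappa^i = 2d+1$ embeds the whole manifold $\mathcal{M}$ and hence that $\boldsymbol{\Phi}_{f,\psi}^{-1}$ exists. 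Without invoking that theorem, the object $p_D( \boldsymbol{x}_n \mid \langle y^{i,(\kappa^i)}_n \rangle )$ cannot be given the interpretation your argument needs, so this would have to be added as an explicit step even if the final ratio identity were repaired.
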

\begin{proof}
%\vspace{-2ex}
The generalised time delay embedding theorem~\cite{deyle11a} states that, under certain technical assumptions, and given $M$ inhomogeneous observation functions $\{ \psi^1, \psi^2, \ldots, \psi^M \}$, the map
\begin{equation} \label{eq:phis}
	\boldsymbol{\Phi}_{f,\psi}(\boldsymbol{x}) = \langle \boldsymbol{\Phi}_{f^1,\psi^1}(\boldsymbol{x}),\boldsymbol{\Phi}_{f^2,\psi^2}(\boldsymbol{x}), \ldots,\boldsymbol{\Phi}_{f^M,\psi^M}(\boldsymbol{x}) \rangle
\end{equation}
is an embedding where each subsystem (local) map $\boldsymbol{\Phi}_{f^i,\psi^i}:\mathcal{M}\to\mathbb{R}^{\kappa^i}$, and, at time index $n$ is described by
\begin{equation} \nonumber
	\boldsymbol{\Phi}_{f^i,\psi^i}(\boldsymbol{x}_n) = y^{i,(\kappa^i)}_n = \langle \psi^i \left( \boldsymbol{x}_n \right), \psi^{i} ( \boldsymbol{x}_{n-\tau^i} ), \psi^{i} ( \boldsymbol{x}_{n-2\tau^i} ), \ldots, \psi^i ( \boldsymbol{x}_{n-(\kappa^i-1)\tau^i} ) \rangle
\end{equation}
where $\sum_i \kappa^i=2d+1$~\cite{deyle11a}.\footnote{The original proof~\cite{deyle11a} uses positive lags, however the authors note that the use of negative lags also applies (and should be used in the case of endomorphisms~\cite{takens02a}).} Therefore, the global map~\eqref{eq:phis} is given by $\boldsymbol{\Phi}_{f,\psi}(\boldsymbol{x}_n) = \langle y^{i,(\kappa^i)}_n \rangle$ and there must exist an inverse map $\boldsymbol{x}_n = \boldsymbol{\Phi}_{f,\psi}^{-1}\left( \langle y^{i,(\kappa^{i})}_n \rangle\right)$. Given Lemma~\ref{lem:gds-observation}, the existence of $\boldsymbol{\Phi}_{f,\psi}^{-1}$, and since $\forall i, \{y^{i,(\kappa^i)}_n, \langle y^{ij,(\kappa^{ij})}_n \rangle_j\} \subseteq \langle y^{i,(\kappa^{i})}_n \rangle$, we arrive at the following equation:
\begin{align} \label{eq:end2}
		\prod_{i=1}^M p_D\left( Y^i_{n+1} = \mathbf{G}^i\left( y^{i,(\kappa^i)}_n, \langle y^{ij,(\kappa^{ij})}_n \rangle_j \right) \mid y^{i,(\kappa^i)}_n, \langle y^{ij,(\kappa^{ij})}_n \rangle_j \right) & \nonumber \\
		&\hspace{-75mm}= p_D\left( \boldsymbol{X}_n = \boldsymbol{\Phi}_{f,\psi}^{-1}\left( \langle y^{i,(\kappa^{i})}_n \rangle \right) \mid \langle y^{i,(\kappa^{i})}_n \rangle \right)  \\
		&\hspace{-67mm} \times \prod_{i=1}^M p_D\left( X^i_{n+1} = f^i( x^i_n, \langle x^{ij}_n \rangle_j ) \mid x^i_n, \langle x^{ij}_n \rangle_j \right) \cdot p_D\left( Y^i_{n+1} = \psi^i( x^i_{n+1} ) \mid x^i_{n+1} \right). \nonumber
\end{align}
Rearranging~\eqref{eq:end2} gives the equality in~\eqref{eq:cpd2}.
\end{proof}
Lemma~\ref{lem:graph-cpd} shows that the distributions can be reformulated by conditioning on delay vectors. The RHS of~\eqref{eq:cpd2} can be used to perform inference in the 2TBN~\eqref{eq:factored-dbn}. The numerator is a product of local \acp{CPD} of scalar variables, and can thus be computed by either counting (for discrete variables) or density estimation (for continuous variables). The denominator is used to compute the probability that the hidden state occured, given an observed delay vector; fortunately, \ifpaper \citet{casdagli91a} \else Casdagli~\cite{casdagli91a} \fi established methods to compute this \ac{CPD} for a variety of practical scenarios. Therefore, Lemma~\ref{lem:graph-cpd} provides a method to perform exact inference. Using this delay vector representation, we arrive at the following theorem.
\begin{theorem} \label{lem:ll}
Consider a synchronous \ac{GDS} $(G,\boldsymbol{x}_n,\boldsymbol{y}_n,\{f^{i}\}, \{\psi^i\})$, where the graph $G$ is a \ac{DAG}. Each subsystem state follows the dynamics $x^i_{n+1} = f^i( x^i_n, \langle x^{ij}_n \rangle_j )$ and generates an observation $y^i_{n+1} = \psi^i( x^i_{n+1} )$; a complete dataset is given by the sequence of observations $D = (\boldsymbol{y}_1, \boldsymbol{y}_2, \ldots, \boldsymbol{y}_N)$. The log-likelihood of the data given a network structure can be computed in terms of conditional entropy:
\begin{align} \label{eq:log-ll-cond-entropy}
	\ell( \hat{\Theta}_{G} : D ) &= N \cdot H( \boldsymbol{X}_n \mid \langle Y^{i,(\kappa^i)}_n \rangle ) - N \cdot \sum_{i=1}^M H( Y^i_{n+1} \mid Y^{i,(\kappa^i)}_{n}, \langle Y^{ij,(\kappa^{ij})}_n \rangle_j )
\end{align}
\end{theorem}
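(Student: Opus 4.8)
The plan is to start from the log-likelihood decomposition already recorded in \eqref{eq:ll-prob} and to convert its latent-variable CPDs into the observable ratio supplied by Lemma~\ref{lem:graph-cpd}. The maximum log-likelihood of a multinomial 2TBN is $N$ times the expectation, under the empirical distribution $p_D$, of the log of the factored transition CPD \eqref{eq:factored-dbn}; that is, $\ell(\hat\Theta_G:D) = N\,\mathbb{E}_{p_D}\!\left[ \log \prod_{i=1}^M p_D(x^i_{n+1}\mid x^i_n,\langle x^{ij}_n\rangle_j)\, p_D(y^i_{n+1}\mid x^i_{n+1}) \right]$. The first move is simply to substitute the identity \eqref{eq:cpd2} inside this expectation, replacing the product of latent CPDs by $\prod_i p_D(y^i_{n+1}\mid y^{i,(\kappa^i)}_n,\langle y^{ij,(\kappa^{ij})}_n\rangle_j)$ divided by $p_D(\boldsymbol{x}_n\mid\langle y^{i,(\kappa^i)}_n\rangle)$.

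Second, I would use the logarithm to split the ratio into a difference: the log of the numerator, which is a product over $i$ and hence a sum of logs of scalar-observation CPDs, minus the log of the single denominator term. Linearity of expectation then separates $\ell$ into the sum $N\sum_{i=1}^M \mathbb{E}_{p_D}[\log p_D(y^i_{n+1}\mid y^{i,(\kappa^i)}_n,\langle y^{ij,(\kappa^{ij})}_n\rangle_j)]$ minus the term $N\,\mathbb{E}_{p_D}[\log p_D(\boldsymbol{x}_n\mid\langle y^{i,(\kappa^i)}_n\rangle)]$.

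Third, I would recognise each expectation as a conditional entropy via definition \eqref{eq:conditional-entropy}, namely $\mathbb{E}_{p_D}[\log p_D(A\mid B)] = -H(A\mid B)$. Applying this to each numerator term produces $-H(Y^i_{n+1}\mid Y^{i,(\kappa^i)}_n,\langle Y^{ij,(\kappa^{ij})}_n\rangle_j)$, and to the denominator term produces $-H(\boldsymbol{X}_n\mid\langle Y^{i,(\kappa^i)}_n\rangle)$; the two minus signs (one from the ratio, one from the entropy) combine to flip the denominator term positive, yielding exactly \eqref{eq:log-ll-cond-entropy}.

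The main obstacle I anticipate is bookkeeping of the conditioning sets rather than any deep difficulty: I must check that the empirical joint over the relevant delay vectors is the same distribution against which both \eqref{eq:ll-prob} and \eqref{eq:conditional-entropy} are taken, so that marginalising out the variables absent from a given factor is consistent across the state and observation terms. In particular, for the denominator one must confirm that the joint statistic $(\boldsymbol{X}_n,\langle Y^{i,(\kappa^i)}_n\rangle)$ is the correct argument, so that the collapsed term is the \emph{joint} conditional entropy $H(\boldsymbol{X}_n\mid\langle Y^{i,(\kappa^i)}_n\rangle)$ and not a sum of per-subsystem entropies. Because Lemma~\ref{lem:graph-cpd} already guarantees the substituted quantities are well-defined CPDs of observables, no further analytic work (such as re-deriving the embeddings) is required; once the substitution is in place the argument is essentially algebraic.
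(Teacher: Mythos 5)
Your proposal is correct and follows essentially the same route as the paper: substitute the identity of Lemma~\ref{lem:graph-cpd} into the log-likelihood \eqref{eq:ll-prob}, split the logarithm of the resulting ratio, and read off each expectation as a conditional entropy via \eqref{eq:conditional-entropy}, with the bookkeeping point you raise corresponding to the paper's remark about dropping arguments of the joint that are nullified against each CPD. No substantive differences.
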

\begin{proof}
%\vspace{-2ex}
%	Let $w^{i,(\kappa)}_n = \langle y^{i,(\kappa^{i})}_n, \langle y^{ij,(\kappa^{ij})}_n \rangle_j \rangle$. Then, from Eq.~\eqref{eq:cpd2}, we can rewrite~\eqref{eq:ll-prob} as
%	\begin{align} \label{eq:ll3}
%		\ell( \hat{\Theta}_{G_\rightarrow} : D ) &= N \sum_{i=1}^M \sum_{y^i_{n+1}, w^{i,(\kappa)}_n } p( y^i_{n+1}, w^{i,(\kappa)}_n ) \log p( y^i_{n+1} \mid w^{i,(\kappa)}_n ) \nonumber \\
%		 &\hspace{10mm} - N \sum_{\boldsymbol{x}_n, \langle y^{i,(\kappa^{i})}_n \rangle } p( \boldsymbol{x}_n, \langle y^{i,(\kappa^i)}_n \rangle ) \log p( \boldsymbol{x}_n \mid \langle y^{i,(\kappa^i)}_n \rangle ).
%	\end{align}
	Substituting~\eqref{eq:cpd2} into~\eqref{eq:ll-prob} gives the log-likelihood $\ell( \hat{\Theta}_{G} : D )$ as
	\begin{align} \label{eq:ll3}
			&\hspace{-10mm} N \sum_{i=1}^M \sum_{y^i_{n+1}} \sum_{y^{i,(\kappa^{i})}_n} \sum_{\langle y^{ij,(\kappa^{ij})}_n \rangle_j } p_D( y^i_{n+1}, y^{i,(\kappa^{i})}_n, \langle y^{ij,(\kappa^{ij})}_n \rangle_j ) \log p_D( y^i_{n+1} \mid y^{i,(\kappa^{i})}_n, \langle y^{ij,(\kappa^{ij})}_n \rangle_j ) \nonumber \\
	 &\hspace{-5mm} - N \sum_{\boldsymbol{x}_n} \sum_{\langle y^{i,(\kappa^{i})}_n \rangle } p_D( \boldsymbol{x}_n, \langle y^{i,(\kappa^i)}_n \rangle ) \log p_D( \boldsymbol{x}_n \mid \langle y^{i,(\kappa^i)}_n \rangle ).
	\end{align}
	In~\eqref{eq:ll3} we have removed arguments of the joint distributions that will be nullified when multiplied with the \ac{CPD}. Expressing~\eqref{eq:ll3} in terms of conditional entropy~\eqref{eq:conditional-entropy}, we arrive at~\eqref{eq:log-ll-cond-entropy}.
\end{proof}
%Theorem~\ref{lem:ll} has an intuitive explanation. The first term $H( \boldsymbol{X}_n \mid \langle Y^{i,(\kappa^i)}_n \rangle )$ quantifies the uncertainty about the entire system state, given a collection of multivariate observations. The density associated with this term has been studied extensively~\cite{casdagli91a} and has solutions for a number of non-restrictive functions and noise terms. The individual terms in the sum $H( Y^i_{n+1} \mid Y^{i,(\kappa^i)}_{n}, \langle Y^{ij,(\kappa^{ij})}_n \rangle_j )$ quantifies the amount of uncertainty left in estimating a subsystem observation $y^i_{n+1}$ after taking into account a historical collection of observations.

%We still need to discuss the discretisation of the data. Theoretically, it would be OK to use non-parametric techniques for density estimation, however the number of parameters grow linearly with the data, so the model dimension
%\begin{equation} \nonumber
%	C(G) = \max_G \sum_{i=1}^M \left[ N^{\kappa^i} \cdot ( N - 1 ) \cdot \sum_{ \langle Y^{ij}_n \rangle_j } N^{\kappa^{ij}} \right],
%\end{equation}
%which is even more ridiculous that Eq.~\eqref{eq:dim-g2}.
%}

\begin{theorem} \label{th:max-ll}
The information criterion~\eqref{eq:g-mdl} for synchronous {GDS} can be computed as:
\begin{align} \label{eq:g-bic2}
	g( B : D ) &= - N \cdot \sum_{i=1}^M H( Y^i_{n+1} \mid Y^{i,(\kappa^i)}_{n}, \langle Y^{ij,(\kappa^{ij})}_n \rangle_j ) \nonumber \\
							  &\hspace{10mm} - f(N) \cdot \sum_{i=1}^M \Big( | Y^i_n |^{\kappa^i} \cdot (|Y^i_n| - 1 ) \cdot \prod_{ V^p \in \Pi_G( V^i ) } | Y^p_n |^{\kappa^{p}} \Big).
\end{align}
\end{theorem}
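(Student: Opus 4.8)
The plan is to start from the general information-criterion template \eqref{eq:g-mdl}, namely $g(B:D) = \ell(\hat{\Theta}_G:D) - f(N)\cdot C(G)$, and to compute the two pieces separately: the log-likelihood $\ell(\hat{\Theta}_G:D)$, which should yield the first summation, and the model dimension $C(G)$, which should yield the second. The first piece is almost immediate from Theorem~\ref{lem:ll}, which already expresses the log-likelihood as $N\cdot H(\boldsymbol{X}_n \mid \langle Y^{i,(\kappa^i)}_n\rangle) - N\sum_i H(Y^i_{n+1}\mid Y^{i,(\kappa^i)}_n, \langle Y^{ij,(\kappa^{ij})}_n\rangle_j)$. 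Comparing with the claimed \eqref{eq:g-bic2}, the only gap in the first line is that the leading term $N\cdot H(\boldsymbol{X}_n\mid \langle Y^{i,(\kappa^i)}_n\rangle)$ must be shown to vanish.

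To kill that term I would invoke the embedding established in the proof of Lemma~\ref{lem:graph-cpd}: the global delay map $\boldsymbol{\Phi}_{f,\psi}$ is a diffeomorphism onto its image, so its inverse $\boldsymbol{\Phi}_{f,\psi}^{-1}$ is a well-defined single-valued function and $\boldsymbol{x}_n = \boldsymbol{\Phi}_{f,\psi}^{-1}(\langle y^{i,(\kappa^i)}_n\rangle)$. Hence the state $\boldsymbol{X}_n$ is a deterministic function of the conditioning collection $\langle Y^{i,(\kappa^i)}_n\rangle$, the conditional law $p_D(\boldsymbol{x}_n \mid \langle y^{i,(\kappa^i)}_n\rangle)$ is degenerate (unit mass on the image point), and so $H(\boldsymbol{X}_n\mid \langle Y^{i,(\kappa^i)}_n\rangle)=0$. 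This collapses the log-likelihood to $-N\sum_i H(Y^i_{n+1}\mid Y^{i,(\kappa^i)}_n, \langle Y^{ij,(\kappa^{ij})}_n\rangle_j)$, which is exactly the first line of \eqref{eq:g-bic2}.

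For the second piece I would count the free parameters of the multinomial CPDs appearing in the delay-vector factorisation of Lemma~\ref{lem:graph-cpd}. Each subsystem contributes one local CPD $p_D(y^i_{n+1}\mid y^{i,(\kappa^i)}_n, \langle y^{ij,(\kappa^{ij})}_n\rangle_j)$ whose child $Y^i_{n+1}$ has $|Y^i_n|$ outcomes (using stationarity of $B_\rightarrow$, so $|Y^i_{n+1}|=|Y^i_n|$), giving $|Y^i_n|-1$ free parameters per parent configuration. The conditioning set consists of the subsystem's own delay vector $Y^{i,(\kappa^i)}_n$, contributing $|Y^i_n|^{\kappa^i}$ configurations, together with one delay vector per parent $V^p\in\Pi_G(V^i)$, each contributing $|Y^p_n|^{\kappa^p}$ configurations. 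Multiplying the per-configuration count by the total number of configurations and summing over subsystems yields $C(G)=\sum_{i=1}^M |Y^i_n|^{\kappa^i}(|Y^i_n|-1)\prod_{V^p\in\Pi_G(V^i)}|Y^p_n|^{\kappa^p}$, matching the second line of \eqref{eq:g-bic2}. Substituting both pieces into \eqref{eq:g-mdl} then completes the proof.

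I expect the main obstacle to be the rigorous justification that $H(\boldsymbol{X}_n\mid \langle Y^{i,(\kappa^i)}_n\rangle)=0$: the embedding guarantees a deterministic inverse on the manifold, but the argument quietly assumes the discretisation is compatible, in the sense that the binned state is itself a function of the binned delay coordinates (equivalently, that the induced conditional law is degenerate). I would make the deterministic-function-has-zero-conditional-entropy step explicit and flag the continuous-to-discrete transfer as resting on the standing assumption, stated after \eqref{eq:ll-prob}, that the data come from a continuous stationary process observed through a fixed discretisation.
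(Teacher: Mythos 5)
Your decomposition into log-likelihood plus model dimension, and your multinomial parameter count for $C(G)$, match the paper exactly. The divergence---and the genuine gap---is in how you dispose of the term $N \cdot H( \boldsymbol{X}_n \mid \langle Y^{i,(\kappa^i)}_n \rangle )$. You argue it vanishes because the global delay map $\boldsymbol{\Phi}_{f,\psi}$ has a deterministic inverse, so that $\boldsymbol{X}_n$ is a function of $\langle Y^{i,(\kappa^i)}_n \rangle$. That claim fails for the class of systems the theorem covers: the dynamics \eqref{eq:dynamics} and observations \eqref{eq:observation} carry noise terms $\upsilon_{f^i}$ and $\upsilon_{\psi^i}$, under which the state is not a deterministic function of any finite window of observations, and the paper's own discussion after Lemma~\ref{lem:graph-cpd} treats $p_D( \boldsymbol{x}_n \mid \langle y^{i,(\kappa^i)}_n \rangle )$ as a genuinely nondegenerate \ac{CPD} to be estimated (citing Casdagli's methods for doing exactly that). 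Your own caveat about discretisation compatibility is a second, independent reason the conditional entropy need not be exactly zero even in the noise-free case.

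The paper does not need the term to vanish. Its argument is weaker and noise-robust: $H( \boldsymbol{X}_n \mid \langle Y^{i,(\kappa^i)}_n \rangle )$ is conditioned on the full collection of delay vectors over all subsystems, so it does not depend on the candidate parent sets and hence not on $G$. It is therefore an $\mathcal{O}(N)$ additive constant that can be dropped when searching for $G^* = \argmax_G g(B:D)$ with $N$ held fixed; the stated equality \eqref{eq:g-bic2} is to be read modulo that graph-independent constant. If you replace your vanishing-entropy step with this graph-independence observation, the rest of your proof goes through unchanged.
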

\begin{proof}
%\vspace{-2ex}
The distributions for the first term in~\eqref{eq:ll3} do not depend on the parents of a subsystem and thus are independent of the graph $G$ being considered. Therefore, we have the following equation for maximimum log-likelihood:
\begin{equation} \label{eq:max-log-ll}
	\max_{G} \ell( \hat{\Theta}_G : D ) = \mathcal{O}(N) - N \cdot \min_G \sum_{i=1}^M H( Y^i_{n+1} \mid Y^{i,(\kappa^i)}_{n}, \langle Y^{ij,(\kappa^{ij})}_n \rangle_j ).
\end{equation}
We can now compute the number of parameters needed to specify the model as~\cite{friedman98a}
\begin{equation} \label{eq:dim-g2}
	C(G) = \sum_{i=1}^M \Big( | Y^i_n |^{\kappa^i} \cdot \left(|Y^i_n| - 1 \right) \cdot \prod_{ V^p \in \Pi_G( V^i ) } | Y^{p}_n |^{\kappa^{p}} \Big).
\end{equation}
Since we are searching for the graph $G^* = \max_G{ g( B : D ) }$, holding $N$ constant, we can substitute~\eqref{eq:max-log-ll} and~\eqref{eq:dim-g2} into~\eqref{eq:g-mdl} and ignore the constant term $\mathcal{O}(N)$ in~\eqref{eq:max-log-ll}.
\end{proof}

\subsection{The log-likelihood and information transfer} \label{sec:scoring-gds}
%\subsection{Information-theoretic perspective of the log-likelihood} \label{sec:scoring-gds}
%\vspace{-2ex}
To conclude our study of the scores, we look at the log-likelihood in the context of information transfer. First, rearranging the terms of collective transfer entropy~\eqref{eq:te} we can rewrite the log-likelihood function~\eqref{eq:log-ll-cond-entropy}, leading to the following result.
\begin{proposition} \label{prop:ll-decomp}
The log-likelihood function for the synchronous \ac{GDS}~\eqref{eq:log-ll-cond-entropy} decomposes as follows:
\begin{equation} \label{eq:ll}
	\ell( \hat{\Theta}_{G} : D ) = N \cdot H( \boldsymbol{X}_n \mid \langle Y^{i,(\kappa^i)}_n \rangle ) - N \cdot \sum_{i=1}^M H( Y^i_{n+1} \mid Y^{i,(\kappa^i)}_n ) + N \cdot \sum_{i=1}^M  T_{ \langle Y^{ij} \rangle_j \to Y^i }.
\end{equation}
\end{proposition}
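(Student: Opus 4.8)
The plan is to obtain \eqref{eq:ll} by a direct substitution, treating the second term of the log-likelihood from Theorem~\ref{lem:ll} as one half of a collective transfer entropy. The starting point is \eqref{eq:log-ll-cond-entropy}, whose first summand $N \cdot H( \boldsymbol{X}_n \mid \langle Y^{i,(\kappa^i)}_n \rangle )$ is already identical to the first term appearing in \eqref{eq:ll} and can be carried through untouched. The entire content of the proposition therefore lives in the sum of conditional entropies $\sum_{i=1}^M H( Y^i_{n+1} \mid Y^{i,(\kappa^i)}_{n}, \langle Y^{ij,(\kappa^{ij})}_n \rangle_j )$, which I will rewrite term by term.

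First I would instantiate the definition of collective transfer entropy \eqref{eq:te} for each subsystem $i$, taking the destination process to be $Y^i$ and the source set $\boldsymbol{V}$ to be the measurement variables of the parents, $\langle Y^{ij} \rangle_j$. This gives
\begin{equation}
	T_{ \langle Y^{ij} \rangle_j \to Y^i } = H\left( Y^i_{n+1} \mid Y^{i,(\kappa^i)}_{n} \right) - H\left( Y^i_{n+1} \mid Y^{i,(\kappa^i)}_{n}, \langle Y^{ij,(\kappa^{ij})}_n \rangle_j \right).
\end{equation}
The key observation is that the subtracted conditional entropy here is precisely the $i$th summand of the second term in \eqref{eq:log-ll-cond-entropy}. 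Solving the displayed identity for that summand yields
\begin{equation}
	H\left( Y^i_{n+1} \mid Y^{i,(\kappa^i)}_{n}, \langle Y^{ij,(\kappa^{ij})}_n \rangle_j \right) = H\left( Y^i_{n+1} \mid Y^{i,(\kappa^i)}_{n} \right) - T_{ \langle Y^{ij} \rangle_j \to Y^i }.
\end{equation}

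The final step is to substitute this expression back into \eqref{eq:log-ll-cond-entropy}, distribute the sum over $i$ and the factor $-N$, and collect the three resulting groups of terms; the minus sign in front of the transfer entropy becomes a plus, producing the $+ N \cdot \sum_i T_{\langle Y^{ij} \rangle_j \to Y^i}$ contribution and leaving $- N \cdot \sum_i H( Y^i_{n+1} \mid Y^{i,(\kappa^i)}_n )$ as the history term, exactly matching \eqref{eq:ll}. I do not anticipate a genuine obstacle here: the argument is an algebraic rearrangement that hinges only on recognising the history-conditioned entropy $H( Y^i_{n+1} \mid Y^{i,(\kappa^i)}_n )$ as the ``source-free'' baseline inside \eqref{eq:te}. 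The only point requiring mild care is notational bookkeeping---ensuring the delay-embedding superscripts $(\kappa^i)$ and $(\kappa^{ij})$ and the parent indexing $\langle \cdot \rangle_j$ line up between the transfer-entropy definition and the log-likelihood so that the conditioning sets genuinely coincide rather than merely resemble one another.
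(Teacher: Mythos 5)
Your proposal is correct and is exactly the paper's argument: the paper likewise obtains \eqref{eq:ll} by rearranging the collective transfer entropy definition \eqref{eq:te} to express $H( Y^i_{n+1} \mid Y^{i,(\kappa^i)}_{n}, \langle Y^{ij,(\kappa^{ij})}_n \rangle_j )$ as $H( Y^i_{n+1} \mid Y^{i,(\kappa^i)}_{n} ) - T_{ \langle Y^{ij} \rangle_j \to Y^i }$ and substituting into \eqref{eq:log-ll-cond-entropy}. No further comment is needed.
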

\vspace{-2ex}
Again, the first two terms in~\eqref{eq:ll} do not depend on the proposed graph structure, and thus maximising log-likelihood is equivalent to maximising collective transfer entropy. This becomes clear when we consider the \emph{log-likelihood ratio}. This ratio quantifies the gain in likelihood by modelling the data $D$ by a candidate network $B$ instead of the empty network $B_\emptyset$, i.e., $$\ell( \hat{\Theta}_{G} : D ) - \ell( \hat{\Theta}_{G_\emptyset} : D ) \propto \log \frac{ p( B \mid D ) }{ p( B_\emptyset \mid D ) }.$$
%\begin{equation} \label{eq:ll-ratio}
%	\ell( \hat{\Theta}_{G} : D ) - \ell( \hat{\Theta}_{G_\emptyset} : D ) = \log \frac{ p( D \mid G, \hat{\Theta}_G ) }{ p( D \mid G_\emptyset, \hat{\Theta}_{G_\emptyset} ) } \propto \log \frac{ p( B \mid D ) }{ p( B_\emptyset \mid D ) }.
%\end{equation}
Recall that the empty \ac{DAG} $G_\emptyset$ is one with no parents for all vertices $\forall i, \Pi_G( V^i ) = \langle Y^{ij,(\kappa^{ij})}_n \rangle_j = \emptyset$. Substituting this definition into~\eqref{eq:log-ll-cond-entropy} (or, alternatively~\eqref{eq:ll}) gives the following result.
\begin{proposition} \label{prop:ll-ratio}
The ratio of the log-likelihood~\eqref{eq:log-ll-cond-entropy} of a candidate \ac{DAG} $G$ to the empty network $G_\emptyset$ can be expressed as $$\ell( \hat{\Theta}_{G} : D ) - \ell( \hat{\Theta}_{G_\emptyset} : D ) = N \cdot \sum_{i=1}^M T_{\langle Y^{ij} \rangle_j \to Y^i}.$$
\end{proposition}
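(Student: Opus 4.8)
The plan is to obtain $\ell(\hat{\Theta}_{G_\emptyset}:D)$ by specialising the log-likelihood formula of Theorem~\ref{lem:ll} to the empty graph, subtract it from $\ell(\hat{\Theta}_{G}:D)$, and recognise the remainder as a sum of collective transfer entropies. The central observation is that the first term of~\eqref{eq:log-ll-cond-entropy}, namely $N \cdot H(\boldsymbol{X}_n \mid \langle Y^{i,(\kappa^i)}_n \rangle)$, involves only each subsystem's own delay vector together with the hidden state, and is therefore independent of the hypothesised coupling; it will cancel in the difference.

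First I would write $\ell(\hat{\Theta}_{G}:D)$ from~\eqref{eq:log-ll-cond-entropy}. Then, invoking the defining property of the empty DAG, $\forall i,\ \langle Y^{ij,(\kappa^{ij})}_n \rangle_j = \emptyset$, I would delete the parent delay vectors from the conditioning set of the second term, so that $H(Y^i_{n+1} \mid Y^{i,(\kappa^i)}_{n}, \langle Y^{ij,(\kappa^{ij})}_n \rangle_j)$ collapses to $H(Y^i_{n+1} \mid Y^{i,(\kappa^i)}_{n})$. This yields
\[ \ell(\hat{\Theta}_{G_\emptyset}:D) = N \cdot H(\boldsymbol{X}_n \mid \langle Y^{i,(\kappa^i)}_n \rangle) - N \cdot \sum_{i=1}^M H(Y^i_{n+1} \mid Y^{i,(\kappa^i)}_{n}). \]

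Subtracting, the shared first term cancels and I am left with $N \cdot \sum_{i=1}^M \left[ H(Y^i_{n+1} \mid Y^{i,(\kappa^i)}_{n}) - H(Y^i_{n+1} \mid Y^{i,(\kappa^i)}_{n}, \langle Y^{ij,(\kappa^{ij})}_n \rangle_j) \right]$. Comparing the bracketed difference against the definition~\eqref{eq:te} --- identifying the destination as $Y^i$, its own history as $Y^{i,(\kappa^i)}_n$, and the source set as the parents $\langle Y^{ij}\rangle_j$ --- each summand is precisely $T_{\langle Y^{ij}\rangle_j \to Y^i}$, which gives the stated identity. An equivalent and even shorter route starts from the decomposition in Proposition~\ref{prop:ll-decomp}, where the transfer-entropy terms already appear explicitly and simply vanish on the empty network, leaving the difference at once.

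The only real subtlety, rather than an obstacle, is justifying that $H(\boldsymbol{X}_n \mid \langle Y^{i,(\kappa^i)}_n \rangle)$ is genuinely graph-independent. I would argue this by appeal to Lemma~\ref{lem:graph-cpd}: the stacked delay vectors $\langle Y^{i,(\kappa^i)}_n \rangle$ form an embedding of the full state $\boldsymbol{x}_n$ that is determined solely by the dynamical system and not by the hypothesised edges, so this conditional entropy takes the same value for every candidate graph and legitimately cancels in the difference.
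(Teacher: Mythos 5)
Your proposal is correct and follows essentially the same route as the paper: the authors likewise obtain the result by substituting the empty parent set $\forall i,\ \langle Y^{ij,(\kappa^{ij})}_n \rangle_j = \emptyset$ into~\eqref{eq:log-ll-cond-entropy} (or, alternatively, into the decomposition~\eqref{eq:ll} of Proposition~\ref{prop:ll-decomp}), cancelling the graph-independent terms and recognising the remainder as the collective transfer entropy~\eqref{eq:te}. Your added remark on why $H( \boldsymbol{X}_n \mid \langle Y^{i,(\kappa^i)}_n \rangle )$ is graph-independent is a sensible elaboration of a point the paper leaves implicit.
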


\section{Discussion and future work}
%\vspace{-2ex}
% Discussion:
We have presented a principled method to score the structure of nonlinear dynamical networks, where dynamical units are coupled via a \ac{DAG}. We approached the problem by modelling the time evolution of a synchronous \ac{GDS} as a \ac{DBN}. We then derived the \ac{AIC} and \ac{BIC} scoring functions for the \ac{DBN} based on time delay embedding theorems. Finally, we have shown that the log-likelihood of the synchronous \ac{GDS} can be interpreted in the context of information transfer.

The representation of synchronous \acp{GDS} as \acp{DBN} allows for inference of coupling in dynamical networks and facilitates techniques for synthesis in these systems. \acp{DBN} are an expressive framework that allow representation of generic systems, as well as a numerous general purpose inference techniques that can be used for filtering, prediction, and smoothing~\cite{friedman98a}. Our representation therefore allows for probabilistic reasoning for purposes of planning and prediction in complex systems.

Theorem~\ref{th:max-ll} captures an interesting parallel between learning from complete data and learning nonlinear dynamical networks. If the embedding dimension $\kappa$ and time delay $\tau$ are unity, then the information criterion becomes identical to learning a \ac{DBN} from complete data~\cite{friedman98a}. Thus, our result could be considered a generalisation of typical structure learning procedures.

The results presented here provoke new insights into the concepts of structure learning, nonlinear time series analysis and effective network analysis~\cite{sporns04a,park13a} based on information transfer~\cite{honey07a,lizier11a,cliff13a,cliff16a}. The information-theoretic interpretation of the log-likelihood has interesting consequences in the context of information dynamics and information thermodynamics of nonlinear dynamical networks. The transfer entropy terms in Propositions~\ref{prop:ll-decomp} and~\ref{prop:ll-ratio} show that the optimal structure of a synchronous \ac{GDS} is immediately related to the information processing of distributed computation~\cite{lizier08b}, as well as the thermodynamic costs of information transfer~\cite{prokopenko14a}.

% Future work:

In the future, we aim to perform empirical studies to exemplify the properties of the presented scoring functions. Specifically, the empirical studies should yield insight into the effect of weak, moderate and strong coupling between dynamical units. An important concept to consider in stochastic systems is the convergence of the shadow (reconstructed) manifold to the true manifold~\cite{sugihara12a}; we have implicitly accounted for this phenomena by using \acp{CPD} in our model, however it is important to investigate the property of convergence with different density estimation techniques. In addition, we are interested in the effect of synchrony in these networks and the relationship to previous results for dynamical systems coupled by spanning trees~\cite{wu05a}. We conjecture that approach used here will allow us to derive scoring functions without the assumption of multinomial observations, and thus afford the use of non-parametric density estimators. Parametric techniques, such as learning the parameters of dynamical systems~\cite{ghahramani99a,hefny15a}, could be considered in place of the posterior approximations.

Finally, the reconstruction theorems used in this paper typically make the assumption that the map (or flow) is a diffeomorphism (invertible in time). Thus, given any state, the past and future are uniquely determined and the time delay $\tau$ can be taken positive or negative. In certain cases, however, the time-reversed system is acausal, giving a map that is not time-invertible (an endomorphism). Ideally, we would aim to have methods to infer coupling for both endomorphisms and diffeomorphisms. \ifpaper \citet{takens02a} \else Takens~\cite{takens02a} \fi showed that if the map is an endomorphism, taking the delay vector of temporally \emph{previous} observations forms an embedding. The generalised theorems in~\cite{stark99a,stark03a,deyle11a}, however, were established for diffeomorphisms, rather than endomorphisms; we can only conjecture that taking a delay of past observations (as we have done throughout this paper) follows for these results. Empirical studies using the measures presented in this paper would indicate whether it is an important line of inquiry to prove the generalised reconstruction theorems for endomorphisms.

\section*{Acknowledgements}
%\vspace{-2ex}
We would like to thank Joseph Lizier, J\"{u}rgen Jost, and Wolfram Martens for many helpful discussions, particularly in regards to embedding theory.

This work was supported in part by the Australian Centre for Field Robotics; the New South Wales Government; and the Faculty of Engineering \& Information Technologies, The University of Sydney, under the Faculty Research Cluster Program.

\ifpaper
	\bibliographystyle{frontiersinSCNS_ENG_HUMS}
\else
	\bibliographystyle{ieeetr}
\fi
\bibliography{./structure-learning}

\begin{thebibliography}{10}

\bibitem{boccaletti06a}
S.~Boccaletti, V.~Latora, Y.~Moreno, M.~Chavez, and D.-U. Hwang, ``Complex
  networks: Structure and dynamics,'' {\em Phys. Rep.}, vol.~424, no.~4,
  pp.~175--308, 2006.

\bibitem{wu05a}
C.~W. Wu, ``Synchronization in networks of nonlinear dynamical systems coupled
  via a directed graph,'' {\em Nonlinearity}, vol.~18, no.~3, p.~1057, 2005.

\bibitem{mortveit07a}
H.~Mortveit and C.~Reidys, {\em An Introduction to Sequential Dynamical
  Systems}.
\newblock Springer Science \& Business Media, 2007.

\bibitem{kantz04a}
H.~Kantz and T.~Schreiber, {\em Nonlinear time series analysis}.
\newblock Cambridge university press, 2004.

\bibitem{vicente11a}
R.~Vicente, M.~Wibral, M.~Lindner, and G.~Pipa, ``Transfer entropy -- a
  model-free measure of effective connectivity for the neurosciences,'' {\em J.
  Comp. Neurosci.}, vol.~30, no.~1, pp.~45--67, 2011.

\bibitem{lizier11a}
J.~T. Lizier, J.~Heinzle, A.~Horstmann, J.-D. Haynes, and M.~Prokopenko,
  ``Multivariate information-theoretic measures reveal directed information
  structure and task relevant changes in {fMRI} connectivity,'' {\em J. Comp.
  Neurosci.}, vol.~30, no.~1, pp.~85--107, 2011.

\bibitem{schumacher15a}
J.~Schumacher, T.~Wunderle, P.~Fries, F.~J{\"a}kel, and G.~Pipa, ``A
  statistical framework to infer delay and direction of information flow from
  measurements of complex systems,'' {\em Neural Computation}, vol.~27, no.~8,
  pp.~1555--1608, 2015.

\bibitem{gan14a}
S.~K. Gan, R.~Fitch, and S.~Sukkarieh, ``Online decentralized information
  gathering with spatial--temporal constraints,'' {\em Auton. Robots}, vol.~37,
  no.~1, pp.~1--25, 2014.

\bibitem{xu13a}
Z.~Xu, R.~Fitch, J.~P. Underwood, and S.~Sukkarieh, ``Decentralized coordinated
  tracking with mixed discrete-continuous decisions,'' {\em J. Field Robot.},
  vol.~30, no.~5, pp.~717--740, 2013.

\bibitem{cliff16a}
O.~M. Cliff, J.~T. Lizier, P.~Wang, X.~R. Wang, O.~Obst, and M.~Prokopenko,
  ``Delayed spatio-temporal interactions and coherent structure in multi-agent
  team dynamics,'' {\em Art. Life}, vol.~23, no.~1, 2016.

\bibitem{umenberger16a}
J.~Umenberger and I.~R. Manchester, ``Scalable identification of stable
  positive systems,'' in {\em Proc. of IEEE CDC}, 2016.

\bibitem{stark03a}
J.~Stark, D.~S. Broomhead, M.~E. Davies, and J.~Huke, ``{D}elay embeddings for
  forced systems. {II}. {S}tochastic forcing,'' {\em J. Nonlinear Sci.},
  vol.~13, no.~6, pp.~519--577, 2003.

\bibitem{deyle11a}
E.~R. Deyle and G.~Sugihara, ``Generalized theorems for nonlinear state space
  reconstruction,'' {\em PLOS ONE}, vol.~6, no.~3, p.~e18295, 2011.

\bibitem{schwarz78a}
G.~Schwarz, ``Estimating the dimension of a model,'' {\em Ann. Statist.},
  vol.~6, no.~2, pp.~461--464, 1978.

\bibitem{akaike74a}
H.~Akaike, ``A new look at the statistical model identification,'' {\em IEEE
  Trans. Autom. Control}, vol.~19, no.~6, pp.~716--723, 1974.

\bibitem{lizier10b}
J.~T. Lizier, M.~Prokopenko, and A.~Y. Zomaya, ``Information modification and
  particle collisions in distributed computation,'' {\em Chaos}, vol.~20,
  no.~3, pp.~037109--13, 2010.

\bibitem{sporns04a}
O.~Sporns, D.~R. Chialvo, M.~Kaiser, and C.~C. Hilgetag, ``Organization,
  development and function of complex brain networks,'' {\em Trends Cogn.
  Sci.}, vol.~8, no.~9, pp.~418--425, 2004.

\bibitem{park13a}
H.-J. Park and K.~Friston, ``Structural and functional brain networks: from
  connections to cognition,'' {\em Science}, vol.~342, no.~6158, p.~1238411,
  2013.

\bibitem{honey07a}
C.~J. Honey, R.~K{\"o}tter, M.~Breakspear, and O.~Sporns, ``Network structure
  of cerebral cortex shapes functional connectivity on multiple time scales,''
  {\em Proc. of Natl. Acad. Sci.}, vol.~104, no.~24, pp.~10240--10245, 2007.

\bibitem{cliff13a}
O.~M. Cliff, J.~T. Lizier, X.~R. Wang, P.~Wang, O.~Obst, and M.~Prokopenko,
  ``Towards quantifying interaction networks in a football match,'' in {\em
  RoboCup 2013: Robot World Cup XVII}, pp.~1--13, Springer, 2013.

\bibitem{lizier08b}
J.~T. Lizier, M.~Prokopenko, and A.~Y. Zomaya, ``Local information transfer as
  a spatiotemporal filter for complex systems,'' {\em Phys. Rev. E}, vol.~77,
  no.~2, p.~026110, 2008.

\bibitem{kocarev96a}
L.~Kocarev and U.~Parlitz, ``Generalized synchronization, predictability, and
  equivalence of unidirectionally coupled dynamical systems,'' {\em Physical
  Review Letters}, vol.~76, no.~11, p.~1816, 1996.

\bibitem{kaneko92a}
K.~Kaneko, ``Overview of coupled map lattices,'' {\em Chaos}, vol.~2, no.~3,
  pp.~279--282, 1992.

\bibitem{chickering02a}
D.~M. Chickering, ``Learning equivalence classes of {B}ayesian-network
  structures,'' {\em J. Mach. Learn. Res.}, vol.~2, pp.~445--498, 2002.

\bibitem{lam94a}
W.~Lam and F.~Bacchus, ``Learning {B}ayesian belief networks: An approach based
  on the {MDL} principle,'' {\em Comp. Intell.}, vol.~10, no.~3, pp.~269--293,
  1994.

\bibitem{bouckaert94a}
R.~R. Bouckaert, ``Properties of {B}ayesian belief network learning
  algorithms,'' in {\em Proc. of AUAI UAI}, pp.~102--109, 1994.

\bibitem{heckerman95b}
D.~Heckerman, D.~Geiger, and D.~M. Chickering, ``Learning {B}ayesian networks:
  the combination of knowledge and statistical data,'' {\em Mach. Learn.},
  vol.~20, no.~3, pp.~20--197, 1995.

\bibitem{friedman98a}
N.~Friedman, K.~Murphy, and S.~Russell, ``Learning the structure of dynamic
  probabilistic networks,'' in {\em Proc. of AUAI UAI}, pp.~139--147, 1998.

\bibitem{ghahramani98a}
Z.~Ghahramani, ``Learning dynamic {B}ayesian networks,'' in {\em Adaptive
  Processing of Sequences and Data Structures}, vol.~1387 of {\em Lecture Notes
  in Comp. Sci.}, pp.~168--197, 1998.

\bibitem{schreiber00a}
T.~Schreiber, ``Measuring information transfer,'' {\em Phys. Rev. Lett.},
  vol.~85, no.~2, pp.~461--464, 2000.

\bibitem{hoyer09a}
P.~O. Hoyer, D.~Janzing, J.~M. Mooij, J.~Peters, and B.~Sch\"{o}lkopf,
  ``Nonlinear causal discovery with additive noise models,'' in {\em Advances
  in Neural Information Processing Systems 21}, pp.~689--696, Curran
  Associates, Inc., 2009.

\bibitem{granger69a}
C.~W. Granger, ``Investigating causal relations by econometric models and
  cross-spectral methods,'' {\em Econometrica}, pp.~424--438, 1969.

\bibitem{sugihara12a}
G.~Sugihara, R.~May, H.~Ye, C.-h. Hsieh, E.~Deyle, M.~Fogarty, and S.~Munch,
  ``Detecting causality in complex ecosystems,'' {\em Science}, vol.~338,
  no.~6106, pp.~496--500, 2012.

\bibitem{lizier12c}
J.~T. Lizier and M.~Rubinov, ``Multivariate construction of effective
  computational networks from observational data.'' ArXiV Preprint, 2012.

\bibitem{peters11a}
J.~Peters, D.~Janzing, and B.~Sch{\"o}lkopf, ``Causal inference on discrete
  data using additive noise models,'' {\em IEEE Trans. Pattern Anal. Mach.
  Intell.}, vol.~33, no.~12, pp.~2436--2450, 2011.

\bibitem{gretton09a}
A.~Gretton, P.~Spirtes, and R.~E. Tillman, ``Nonlinear directed acyclic
  structure learning with weakly additive noise models,'' in {\em Advances in
  Neural Information Processing Systems 22}, pp.~1847--1855, Curran Associates,
  Inc., 2009.

\bibitem{takens81a}
F.~Takens, ``Detecting strange attractors in turbulence,'' in {\em Dynamical
  Systems and Turbulence}, vol.~898 of {\em Lecture Notes in Math.},
  pp.~366--381, 1981.

\bibitem{stark99a}
J.~Stark, ``{D}elay embeddings for forced systems. {I}. {D}eterministic
  forcing,'' {\em J. Nonlinear Sci.}, vol.~9, no.~3, pp.~255--332, 1999.

\bibitem{mortveit01a}
H.~S. Mortveit and C.~M. Reidys, ``Discrete, sequential dynamical systems,''
  {\em Discrete Math.}, vol.~226, no.~1, pp.~281--295, 2001.

\bibitem{takens02a}
F.~Takens, ``The reconstruction theorem for endomorphisms,'' {\em Bull. Braz.
  Math. Soc.}, vol.~33, no.~2, pp.~231--262, 2002.

\bibitem{casdagli91a}
M.~Casdagli, S.~Eubank, J.~D. Farmer, and J.~Gibson, ``State space
  reconstruction in the presence of noise,'' {\em Physica D.}, vol.~51, no.~1,
  pp.~52--98, 1991.

\bibitem{prokopenko14a}
M.~Prokopenko and J.~T. Lizier, ``Transfer entropy and transient limits of
  computation,'' {\em Sci. Rep.}, vol.~4, p.~5394, 2014.

\bibitem{ghahramani99a}
Z.~Ghahramani and S.~T. Roweis, ``Learning nonlinear dynamical systems using an
  {EM} algorithm,'' in {\em Advances in Neural Information Processing Systems
  11}, pp.~431--437, MIT Press, 1999.

\bibitem{hefny15a}
A.~Hefny, C.~Downey, and G.~J. Gordon, ``Supervised learning for dynamical
  system learning,'' in {\em Advances in Neural Information Processing Systems
  28}, pp.~1963--1971, Curran Associates, Inc., 2015.

\end{thebibliography}

\end{document}